\documentclass[11pt]{article}
\usepackage{amsmath,amssymb,amsbsy,amsthm, fullpage}

\let\epsilon\varepsilon
\let\phi\varphi

\def\X{{\cal X}}
\def\N{\mathbb N}

\def\C{\mathcal C}
\def\E{{\bf E}}

\def\-as{\text{-a.s.}}

\newtheorem{theorem}{Theorem}
\newtheorem{definition}{Definition}

\begin{document}
\title{On Finding Predictors for Arbitrary Families of Processes}
\author{{\bf Daniil Ryabko}\\    daniil@ryabko.net,   INRIA Lille
}
\date{}
\maketitle

\begin{abstract}
The problem is sequence prediction in the following setting. 
A sequence $x_1,\dots,x_n,\dots$ of discrete-valued observations is generated 
according to some unknown probabilistic law (measure) $\mu$. After observing each outcome, 
it is required to give the conditional probabilities of the next observation.
The measure  $\mu$ belongs to an arbitrary but known class $\C$  of stochastic process measures.
We are interested in predictors $\rho$ whose conditional probabilities converge (in some sense) to the
``true'' $\mu$-conditional probabilities if any $\mu\in\C$ is chosen to generate the sequence.
The contribution of this work is in characterizing the families $\C$ for which such predictors exist, 
and in providing a specific and simple form in which to look for a solution. We show that if any predictor works, then 
there exists a Bayesian predictor,  whose   prior is discrete, and which works too.
 We also find several sufficient and necessary conditions
for the existence of a predictor, in terms of topological characterizations of the family $\C$, as well as in terms
of local behaviour of the measures in $\C$,  which in some cases lead to procedures for constructing such predictors.

It should be  emphasized that the framework is completely general: the stochastic processes considered are not required to be 
i.i.d., stationary, or to belong to any parametric or countable family.
\end{abstract}

\section{Introduction}
Given a  sequence $x_1,\dots,x_n$ of observations $x_i\in\X$, where $\X$ is a finite set, we 
want to predict what are the probabilities of observing $x_{n+1}=x$ for each $x\in\X$, or, more generally,
probabilities of observing different $x_{n+1},\dots,x_{n+h}$, before $x_{n+1}$ is revealed,
after which the process continues.
It is assumed that the sequence is generated by some unknown stochastic process $\mu$, a probability measure
on the space of one-way infinite sequences $\X^\infty$. The goal is to have a predictor whose predicted probabilities
converge (in a certain sense) to the correct ones (that is, to $\mu$-conditional probabilities). In general this goal is impossible to achieve if 
nothing is known about the measure $\mu$ generating the sequence. In other words, one cannot have a predictor
whose error goes to zero for any measure $\mu$. The problem becomes tractable if we assume that the measure $\mu$
generating the data belongs to some known class $\C$.
The  questions addressed in this work are a part of the following general problem: given an arbitrary set  $\C$ of measures, how can we find 
a predictor that performs well when the data is generated by any  $\mu\in\C$, and whether
it is possible to find such a predictor at all. 
An example of a generic  property  of a class $\C$ that allows for construction of a predictor, is that
$\C$ is countable. Clearly, this condition is very strong. An example,  important from the applications point of view,  of a class $\C$ of measures  
for which  predictors are known,  is the class of all stationary measures. The general question, however, is very far from being answered.

The contribution of this work to solving this question is, first, in that we 
 provide a specific form in which to look for a predictor.
More precisely, we show that if a predictor that predicts every $\mu\in\C$ exists, 
then such a predictor  can also be obtained as a weighted sum of  countably many elements of $\C$.  This result
can also be viewed as a justification of the Bayesian approach to sequence prediction: if there exists 
a predictor which predicts well every measure in the class, then there exists a Bayesian predictor (with a rather simple prior) that has this property too.
In this respect it is important to note that the result obtained about such a  Bayesian predictor
is pointwise (holds for every $\mu$ in $\C$), and stretches far beyond the set its prior is concentrated on.
Next, we derive some characterizations of families $\C$ for which a  predictor exist. We first
analyze what is furnished by the notion of separability, when a suitable topology can be found: we find that  it is a sufficient
but not always a necessary condition. We then derive some sufficient conditions
for the existence of a predictor which are based on local (truncated to the first $n$ observation) behaviour of measures
in the class $\C$. Necessary conditions cannot be obtained in this way (as we demonstrate), but  sufficient conditions, along with rates 
of convergence and construction of predictors, can be found.

The {\bf motivation} for studying predictors for arbitrary classes $\C$ of processes is two-fold. First of all, prediction is a basic ingredient for 
constructing intelligent systems. Indeed, in order to be able to find optimal behaviour in an unknown environment,
an intelligent agent must be able, at the very least, to predict how the environment is going to behave (or, to be more precise, how relevant 
parts of the environment are going to behave).
Since the response of the environment may in general depend on the actions of the agent, this response
is necessarily non-stationary for explorative agents. Therefore, one cannot readily use prediction methods developed for stationary 
environments, but rather has to find predictors for the classes of processes that can appear as a possible response
of the environment. 

Apart from this, the problem of prediction itself has numerous applications in such diverse
fields as data compression, market analysis,  bioinformatics, and many others. It seems clear that prediction methods constructed
for one application cannot be expected to be optimal when applied to another. Therefore, an important question is how to develop
specific prediction algorithms for each of the domains. 

{\bf Prior work}.
As it was mentioned,  if the class $\C$ of measures is countable (that is, if $\C$ can be represented as $\C:=\{\mu_k: k\in\N\}$), then 
there exists a predictor which performs well for any $\mu\in\C$. Such a predictor can be obtained as a  Bayesian mixture 
$\rho_S:=\sum_{k\in\N} w_k \mu_k$, where $w_k$ are summable positive real weights, and it has very strong predictive properties; in particular,  $\rho_S$ predicts every $\mu\in\C$ in total variation distance, as follows from the result of  \cite{Blackwell:62}.
Total variation distance measures the difference in (predicted and true) conditional probabilities of all future 
events, that is, not only the probabilities of the next observations, but also of observations that are arbitrary far off in the future (see formal 
definitions below).  In the context of sequence prediction the measure $\rho_S$  was first studied 
by  \cite{Solomonoff:78}. Since then, the idea of taking a convex combination of a finite or countable class of measures (or predictors) to
obtain a predictor permeates most of the research on sequential prediction (see, for example, \cite{Cesa:06}) and  more general learning problems~\cite{Hutter:04uaibook, Ryabko:08ao++}. 
In practice it is clear that, on the one hand, countable models are not sufficient, since already the class $\mu_p, p\in[0,1]$ of
Bernoulli i.i.d. processes, where $p$ is the probability of 0, is not countable. On the other hand, prediction in total variation
can be too strong to require; predicting probabilities of the next observation may be sufficient, maybe even not 
on every step but in the Cesaro sense. A key observation here is that a predictor $\rho_S=\sum w_k\mu_k$ may be a good predictor
not only when the data is generated by one of the processes $\mu_k$, $k\in\N$, but when it comes from a much larger class.
Let us consider this point in more detail. Fix for simplicity $\X=\{0,1\}$. The Laplace predictor
\begin{equation}\label{eq:lapl}
  \lambda(x_{n+1}=0|x_1,\dots,x_n)=\frac{\#\{i\le n: x_i=0\}+1}{n+|\X|}
\end{equation}
predicts any Bernoulli i.i.d.~process: although convergence in total variation distance of conditional probabilities
does not hold, predicted probabilities of the next outcome converge to the correct ones.
 Moreover, generalizing the Laplace predictor,  a predictor  $\lambda_k$ can be constructed for 
the class $M_k$ of all $k$-order Markov measures, for any given $k$. As was found by \cite{BRyabko:88}, the combination $\rho_R:=\sum w_k\lambda_k$
is a good predictor not only for the set $\cup_{k\in\N} M_k$ of all finite-memory processes, but also for any measure
$\mu$ coming from a much larger class: that of all stationary measures on $\X^\infty$. Here prediction is possible
only in the Cesaro sense (more precisely, $\rho_R$ predicts every stationary process in expected time-average  Kullback-Leibler divergence,
see definitions below).
The Laplace predictor itself can be obtained as a Bayes mixture over all Bernoulli i.i.d. measures with uniform 
prior on the parameter $p$ (the probability of 0). However, as was observed in \cite{Hutter:07upb} (and as is easy to see),
the same (asymptotic) predictive properties are possessed by  a Bayes mixture with a countably supported prior which is dense in 
$[0,1]$ (e.g. taking $\rho:=\sum w_k \delta_k$ where $\delta_k, k\in\N$ ranges over all Bernoulli i.i.d. measures with rational probability of 0). 
For a given $k$, the set of $k$-order Markov processes is parametrized by finitely many $[0,1]$-valued parameters. Taking a dense 
subset of the values of these parameters, and a mixture of the corresponding measures, results in a predictor for the class
of $k$-order Markov processes. Mixing over these (for all $k\in\N$) yields, as in \cite{BRyabko:88}, a predictor for the class of all 
stationary processes. Thus, for the mentioned classes of processes, a predictor can be obtained as a Bayes mixture of 
 countably many measures in the class. An additional reason why this kind  of analysis is interesting is because of the difficulties arising in trying to construct 
Bayesian predictors for classes of processes that can not be easily parametrized. Indeed, a natural way to obtain 
a predictor for a class $\C$ of stochastic processes is to take a Bayesian mixture of the class. 
To do this, one needs to define the structure of a probability space on $\C$. 
If the class $\C$ is well parametrized, as is the case with the set of all Bernoulli i.i.d. process, then one can 
integrate with respect to the parametrization. In general, when the problem lacks a natural parametrization, although one can define the structure of the probability 
space on the set of (all) stochastic process measures in many different ways, the results one can obtain will then be
with probability 1 with respect to the prior distribution (see, for example, \cite{Jackson:99}). Pointwise consistency
cannot be assured (see e.g. \cite{Diaconis:86}) in this case, meaning that some  (well-defined) Bayesian predictors are not consistent
on some (large) subset of $\C$. 
Results with prior probability 1  can be hard to interpret if one is not sure that the structure 
of the probability space defined on the set $\C$ is indeed a natural one for the problem at hand (whereas if one does have a natural parametrization,
then usually results for every value of the parameter can be obtained, as in the case with Bernoulli i.i.d. processes mentioned above).
The results of the present work show that when a predictor exists it can indeed be given as  a Bayesian 
predictor, which predicts  every (and not almost every) measure in the class, while its support is only a countable set.

A related question is formulated as a question about
two individual measures, rather than about a class of measures and a predictor.
Namely, one can ask under which conditions one stochastic process 
predicts another. In \cite{Blackwell:62} it was shown that 
if one measure is absolutely continuous with respect to another, than 
the latter predicts the former (the conditional probabilities converge in a very strong sense).
In \cite{Ryabko:07pqisit, Ryabko:08pqaml} a weaker form of convergence 
of probabilities (in particular, convergence of expected average KL divergence) is obtained under 
weaker assumptions.

{\bf The results.} First,  we show that if there is a predictor
that performs well for every measure coming from a class $\C$ of processes, then a predictor can also be obtained as a convex combination $\sum_{k\in\N} w_k\mu_k$ for some $\mu_k\in\C$ and
some $w_k>0$, $k\in\N$. 
This holds if the prediction quality is measured by either total variation distance, or expected average KL divergence: 
one measure of performance that is very strong, the other rather weak. 
The analysis for the total variation case 
relies on the fact that if $\rho$ predicts $\mu$ in total variation distance, then $\mu$ is absolutely continuous
with respect to $\rho$, so that $\rho(x_{1..n})/\mu(x_{1..n})$ converges to a positive number with $\mu$-probability 1
and with a positive $\rho$-probability. However, if we settle for a weaker measure of performance, such as  expected average KL divergence, measures $\mu\in\C$ are typically singular with 
respect to a predictor $\rho$. Nevertheless, since $\rho$ predicts $\mu$ we can show that  $\rho(x_{1..n})/\mu(x_{1..n})$ decreases subexponentially
with $n$ (with high probability or in expectation); then we can use this ratio as an analogue of the density for each time step $n$, and 
find a convex combination of countably many measures from $\C$ that has  desired
predictive properties for each $n$. Combining these predictors for all $n$  results in a predictor that predicts every $\mu\in\C$ in average KL divergence.  The proof techniques developed  have a potential
to be used in solving other questions concerning sequence prediction, in particular, the general question of how to find a predictor
for an arbitrary class $\C$ of measures.

We then  exhibit some sufficient conditions on the class
 $\C$, under which a predictor for all measures in $\C$ exists. It is important to note that none of these conditions relies on a parametrization of any kind.
The conditions presented are of 
two types: conditions on asymptotic behaviour of measures in $\C$, and on their
local (restricted to first $n$ observations) behaviour. Conditions of the first type
concern separability of $\C$ with respect to the total variation distance and the expected average KL divergence.
We show that in the case of total variation separability is a necessary and sufficient condition for the existence of a predictor,
whereas in the case of expected average KL divergence it is sufficient but is not necessary.

The conditions of the second kind concern the ``capacity'' of the sets $\C^n:=\{\mu^n:\mu\in\C\}$, $n\in\N$,
where $\mu^n$ is the measure $\mu$ restricted to the first $n$ observations.
Intuitively, if $\C^n$ is small (in some sense), then prediction
is possible. We measure the capacity of $\C^n$ in two ways. The first way is 
to find the maximum probability given to each sequence $x_1,\dots,x_n$
by some measure in the class, and then take a sum over $x_1,\dots,x_n$.
Denoting the obtained  quantity  $c_n$, one can show that  it grows polynomially in $n$ for 
some important classes of processes, such as i.i.d. or Markov processes.
We show that, in general, if $c_n$ grows subexponentially then a predictor
exists that predicts any measure in  $\C$ in expected average KL divergence.
On the other hand, exponentially growing $c_n$ are not sufficient for prediction. 
 A more refined way to measure the capacity of $\C^n$
is using a concept of channel capacity from information  theory,
which was developed for a closely related problem of finding optimal 
codes for a class of sources. We extend corresponding results from
information theory to show that sublinear growth of channel capacity 
is sufficient for the existence of a predictor, in the sense of expected average divergence.
Moreover, the obtained bounds on the divergence are optimal up to an additive logarithmic term.

The rest of the paper is organized as follows. Section~\ref{s:pre}  introduces the notation and definitions.
In Section~\ref{s:ba} we show that if any predictor works than there is a Bayesian one that works, 
while in Section~\ref{s:ch} we provide several characterizations of predictable classes of processes.
 Section~\ref{s:sep} is concerned with separability, while Section~\ref{s:loc} analyzes conditions
based on local behaviour of measures. Finally, Section~\ref{s:disc} provides outlook and discussion.

As running examples that illustrate the results of each section  we use countable classes of measures, the family of all Bernoulli i.i.d. processes and that of 
all stationary processes.
\section{Preliminaries}\label{s:pre}
Let $\X$ be a finite set. The notation $x_{1..n}$ is used for $x_1,\dots,x_n$. 
 We consider  stochastic processes (probability measures) on $(\X^\infty,\mathcal F)$ where $\mathcal F$
is the sigma-field generated by the cylinder sets  $[x_{1..n}]$, $x_i\in\X, n\in\N$, 
where $[x_{1..n}]$ is the set of all infinite sequences that start with $x_{1..n}$.
For a  finite set $A$ denote $|A|$ its cardinality.
We use  $\E_\mu$ for
expectation with respect to a measure $\mu$.

Next we introduce the measures of the quality of prediction used in this paper.
For two measures  $\mu$ and $\rho$  we are interested in how different 
 the $\mu$- and $\rho$-conditional probabilities are, given a data sample $x_{1..n}$.
Introduce the {\em (conditional) total variation} distance 
$$
v(\mu,\rho,x_{1..n}):= \sup_{A\in\mathcal F} |\mu(A|x_{1..n})-\rho(A|x_{1..n})|.
$$
\begin{definition}
We say that $\rho$ predicts $\mu$ in total variation if 
$$
v(\mu,\rho,x_{1..n})\to0\ \mu\-as
$$
\end{definition}
This convergence is rather strong. In particular, it means that $\rho$-conditional probabilities
of arbitrary far-off events converge to $\mu$-conditional probabilities. 
Moreover,  $\rho$ predicts $\mu$ in
total variation if \cite{Blackwell:62} and only if \cite{Kalai:94} $\mu$ is absolutely continuous with respect to $\rho$:
\begin{theorem}[\cite{Blackwell:62, Kalai:94}]\label{th:bd}
If  $\rho$, $\mu$ are arbitrary probability measures on $(\X^\infty,\mathcal F)$, then $\rho$ predicts $\mu$ in total variation if and only if  $\mu$ is absolutely continuous with respect to $\rho$.
\end{theorem}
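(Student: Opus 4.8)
The plan is to treat the two implications separately, since they are of genuinely different character, and to base both on L\'evy's martingale convergence theorem along the filtration $\mathcal F_n:=\sigma([x_{1..n}])$ generated by the first $n$ coordinates, noting that $\mathcal F_n\uparrow\mathcal F$. Throughout I identify the conditional probability $\rho(A\mid x_{1..n})$ with the regular conditional expectation $\E_\rho[\mathbf 1_A\mid\mathcal F_n]$, and similarly for $\mu$.

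For the implication ``absolutely continuous $\Rightarrow$ predicts'' (the direction of \cite{Blackwell:62}), I would start from the density $f:=d\mu/d\rho$ on $(\X^\infty,\mathcal F)$ and put $f_n:=\E_\rho[f\mid\mathcal F_n]$, which on a cylinder equals $\mu([x_{1..n}])/\rho([x_{1..n}])$. A direct computation shows that the $\mu$-conditional measure $\mu(\cdot\mid x_{1..n})$ has $\rho(\cdot\mid x_{1..n})$-density $f/f_n$, so that the supremum in the definition of $v$ is attained at $\{f/f_n>1\}$ and
\[
v(\mu,\rho,x_{1..n})=\tfrac12\,\E_\rho\!\left[\left|\frac{f}{f_n}-1\right|\;\middle|\;\mathcal F_n\right]
=\frac{1}{2 f_n}\,\E_\rho\!\left[\,|f-f_n|\;\middle|\;\mathcal F_n\right].
\]
Since $f\in L^1(\rho)$ and $\mathcal F_n\uparrow\mathcal F$, martingale convergence gives $f_n\to f$ $\rho$-a.s. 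The remaining point is to show $\E_\rho[\,|f-f_n|\mid\mathcal F_n]\to0$ $\rho$-a.s.; I would obtain this from the two-index estimate, for $m\le n$,
\[
\E_\rho[\,|f-f_n|\mid\mathcal F_n]\le \E_\rho[\,|f-f_m|\mid\mathcal F_n]+|f_m-f_n|,
\]
letting first $n\to\infty$ (the first term tends to $|f-f_m|$ by L\'evy, and $|f_m-f_n|\to|f_m-f|$) and then $m\to\infty$, so that $\limsup_n\E_\rho[\,|f-f_n|\mid\mathcal F_n]\le 2|f-f_m|\to0$. Dividing by $f_n\to f$, which is strictly positive $\mu$-a.s. because $\mu\ll\rho$, yields $v(\mu,\rho,x_{1..n})\to0$ $\mu$-a.s.

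For the converse ``predicts $\Rightarrow$ absolutely continuous'' (the direction of \cite{Kalai:94}) I would argue by contraposition. If $\mu\not\ll\rho$, pick, by Lebesgue decomposition, a set $S\in\mathcal F$ with $\mu(S)>0$ and $\rho(S)=0$. The crucial and essentially elementary observation is that $\rho(S)=0$ forces $\rho(S\cap[x_{1..n}])=0$ for every cylinder, hence $\rho(S\mid x_{1..n})=0$ identically (on cylinders of positive $\rho$-measure). On the other hand $S\in\mathcal F=\mathcal F_\infty$, so L\'evy's zero--one law gives $\mu(S\mid x_{1..n})\to\mathbf 1_S$ $\mu$-a.s., i.e. $\mu(S\mid x_{1..n})\to1$ for $\mu$-almost every $\omega\in S$. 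Testing the supremum in the definition of $v$ against the single event $S$ then gives $v(\mu,\rho,x_{1..n})\ge \mu(S\mid x_{1..n})-\rho(S\mid x_{1..n})\to1$ on a set of positive $\mu$-measure, contradicting prediction. (The case where positive $\mu$-mass passes through a $\rho$-null cylinder, so that the conditionals are undefined at some finite time, is handled separately and even more directly: there $\mu$ is singular with respect to $\rho$ on an explicit $\rho$-null union of cylinders and prediction fails trivially.)

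The main obstacle is in the first implication, and it is precisely the supremum over all $A\in\mathcal F$ in the definition of $v$: for each fixed $A$ the convergence $\mu(A\mid x_{1..n})-\rho(A\mid x_{1..n})\to0$ is immediate from martingale convergence, but upgrading this to uniformity over $A$ is what requires passing to the conditional $L^1$-distance of the density ratio and controlling it through the two-index argument above. The converse is, by contrast, comparatively soft once one notices that a $\rho$-null event has identically vanishing $\rho$-conditional probability while its $\mu$-conditional probability tends to one along $\mu$-typical sequences.
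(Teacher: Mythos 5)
The paper does not actually prove this statement: it is imported wholesale from the literature, with the forward direction attributed to Blackwell and Dubins and the converse to Kalai and Lehrer, so there is no internal proof to compare yours against. On its own merits your argument is correct and is essentially the classical one. The forward direction correctly reduces the conditional total variation to $\frac{1}{2f_n}\E_\rho[\,|f-f_n|\mid\mathcal F_n]$ via the density $f/f_n$ of the conditional measures, and the two-index estimate combined with L\'evy's upward theorem (applied to the fixed integrable variable $|f-f_m|$, then intersecting over countably many $m$) is exactly the right device for upgrading the pointwise martingale convergence $\mu(A\mid x_{1..n})-\rho(A\mid x_{1..n})\to0$ to uniformity over $A\in\mathcal F$ --- you correctly identify this as the only nontrivial obstacle. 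The converse via the Lebesgue decomposition, the identically vanishing $\rho$-conditional probability of a $\rho$-null set $S$, and L\'evy's zero--one law for $\mu(S\mid x_{1..n})$ is also sound. Two small points deserve one line each in a polished write-up: in the forward direction you should note that $\mu$-almost every trajectory passes only through cylinders with $f_n>0$ (since $f_n=0$ forces $\mu([x_{1..n}])=0$), so the division by $f_n$ and the very definition of the conditional measures are licit $\mu$-a.s.; and in the converse the parenthetical case of a $\rho$-null cylinder carrying positive $\mu$-mass should be stated as its own half-sentence rather than waved at, since it is precisely the case where $v$ is not even defined and one must fix a convention under which prediction fails.
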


Thus, for a class $\C$ of measures there is a predictor $\rho$ that predicts every $\mu\in\C$ in total
variation if and only if every $\mu\in\C$ has a density with respect to $\rho$.
Although such  sets of processes are rather large, they do not include even such basic 
examples as the set of all Bernoulli i.i.d. processes.
That is, there is no $\rho$ that would predict in total variation every Bernoulli i.i.d. process measure $\delta_p$, $p\in[0,1]$,
where $p$ is the probability of $0$. 
Therefore, perhaps for many (if not most) practical applications this measure of the quality of prediction is too strong,
and one is interested in weaker measures of performance.

For two measures $\mu$ and $\rho$ introduce the {\em expected cumulative Kullback-Leibler divergence (KL divergence)} as
\begin{equation}\label{eq:akl} 
  d_n(\mu,\rho):=  \E_\mu
  \sum_{t=1}^n  \sum_{a\in\X} \mu(x_{t}=a|x_{1..t-1}) \log \frac{\mu(x_{t}=a|x_{1..t-1})}{\rho(x_{t}=a|x_{1..t-1})},
\end{equation}
In words, we take the expected (over data) average (over time) KL divergence between $\mu$- and $\rho$-conditional (on the past data) 
probability distributions of the next outcome.
\begin{definition}
We say that $\rho$ predicts $\mu$ in expected average KL divergence if 
$$
{1\over n} d_n(\mu,\rho)\to0.
$$
\end{definition}
This measure of performance is much weaker, in the sense that it requires good predictions only one step ahead, and not on every step
but only on average; also the convergence is not with probability 1 but in expectation. With prediction quality so measured, 
predictors  exist for relatively large
classes of measures; most notably, \cite{BRyabko:88} provides a predictor which predicts every stationary 
process in expected average KL divergence. 
A simple but useful identity that we will need (in the context of sequence prediction introduced also in \cite{BRyabko:88})
is the following
\begin{equation}\label{eq:kl}
 d_n(\mu,\rho)=-\sum_{x_{1..n}\in\X^n}\mu(x_{1..n}) \log \frac{\rho(x_{1..n})}{\mu(x_{1..n})},
\end{equation}
where on the right-hand side we have simply the KL divergence between measures $\mu$ and $\rho$ restricted to the first $n$ observations.

Thus, the results of this work will be established with respect to two very different measures of prediction quality,
one of which is very strong and the other rather weak. This suggests that the facts established reflect some fundamental
properties of the problem of prediction, rather than those pertinent to particular measures of performance. On the other hand,
it remains open to extend the results below to different measures of performance.

\section{Fully nonparametric Bayes predictors}\label{s:ba}
In this section we show that if there is a predictor that predicts every $\mu$ in some class $\C$, then there is a Bayesian 
mixture of countably many elements from $\C$ that predicts every $\mu\in\C$ too. This is established for the 
two notions of prediction quality that were introduced: total variation and expected average KL divergence.
After the theorems we present some examples of families of measures for which predictors exist.

\begin{theorem}\label{th:1} Let $\C$ be a set of probability measures on $(\X^\infty, \mathcal F)$. If there is a measure $\rho$ such that $\rho$ predicts every $\mu\in\C$ in 
total variation, then there is a sequence $\mu_k\in\C$, $k\in\N$ such that the measure $\nu:=\sum_{k\in\N} w_k\mu_k$ predicts every $\mu\in\C$ in 
total variation, where $w_k$ are any positive weights  that sum to 1.
\end{theorem}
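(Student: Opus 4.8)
The plan is to reduce the statement to a purely measure-theoretic fact about absolute continuity, using Theorem~\ref{th:bd}. By that theorem, the hypothesis ``$\rho$ predicts every $\mu\in\C$ in total variation'' is equivalent to ``every $\mu\in\C$ is absolutely continuous with respect to $\rho$'', and the desired conclusion is equivalent to ``every $\mu\in\C$ is absolutely continuous with respect to $\nu$''. So it suffices to find a countable family $\mu_k\in\C$ such that every $\mu\in\C$ satisfies $\mu\ll\nu$, where $\nu:=\sum_k w_k\mu_k$. Since each $\mu_k\ll\rho$, we automatically have $\nu\ll\rho$, so both $\nu$ and every $\mu\in\C$ live inside the $\rho$-absolutely-continuous world, and I can work entirely with Radon--Nikodym densities against $\rho$.

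For each $\mu\in\C$ let $f_\mu:=d\mu/d\rho$ and let $S_\mu:=\{f_\mu>0\}$ be its support, defined up to a $\rho$-null set. The key idea is that $\mu\ll\nu$ holds as soon as $S_\mu$ is contained (mod $\rho$-null) in the support of $\nu$, and the support of $\nu=\sum_k w_k\mu_k$ is exactly $\bigcup_k S_{\mu_k}$ because all weights $w_k$ are strictly positive. So I want to choose the countable family so that its supports cover the support of every member of $\C$. To do this I run a standard exhaustion argument: set $W:=\sup\rho\bigl(\bigcup_{\mu\in\mathcal D}S_\mu\bigr)$, where the supremum is over all countable subfamilies $\mathcal D\subseteq\C$. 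Taking a sequence of countable families whose values approach $W$ and forming their union (again countable) yields a single countable family $\{\mu_k\}$ attaining $\rho(S)=W$, where $S:=\bigcup_k S_{\mu_k}$.

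The maximality of this choice is what forces the covering property: if some $\mu\in\C$ had $\rho(S_\mu\setminus S)>0$, then adjoining $\mu$ to $\{\mu_k\}$ would give a countable family whose support has $\rho$-measure strictly exceeding $W$, a contradiction. Hence $S_\mu\subseteq S$ mod $\rho$-null for every $\mu\in\C$. Finally I verify $\mu\ll\nu$ directly: if $\nu(A)=0$, then $g:=d\nu/d\rho=\sum_k w_k f_{\mu_k}$ vanishes $\rho$-a.s.\ on $A$, so $\rho(A\cap S)=0$; since $f_\mu=0$ off $S_\mu\subseteq S$, the quantity $\mu(A)=\int_A f_\mu\,d\rho$ splits into a part over $A\cap S$ (zero because $\rho(A\cap S)=0$) and a part over $A\setminus S$ (zero because $f_\mu=0$ there), giving $\mu(A)=0$. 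Applying Theorem~\ref{th:bd} once more converts $\mu\ll\nu$ back into prediction in total variation, which is what is required.

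The step I expect to require the most care is establishing that the supremum $W$ is actually attained by a \emph{single countable} family, together with the accompanying maximality argument; everything else is routine bookkeeping with densities. The essential insight --- and the only genuinely nontrivial point --- is the translation, via Theorem~\ref{th:bd}, of the prediction problem into a support-covering problem for $\rho$-densities, after which the countability of the solution comes out of the exhaustion procedure and the finiteness of the total mass $\rho(\X^\infty)=1$.
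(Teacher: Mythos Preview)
Your proof is correct and follows the same overall strategy as the paper: reduce via Theorem~\ref{th:bd} to absolute continuity, pass to Radon--Nikodym densities $f_\mu$ against $\rho$, cover the supports $\{f_\mu>0\}$ by the supports of countably many $\mu_k$, and then verify $\mu\ll\nu$ directly. The one substantive difference is in the exhaustion step: you take the supremum of $\rho\bigl(\bigcup_{\mu\in\mathcal D}S_\mu\bigr)$ over all countable $\mathcal D\subseteq\C$ and show it is attained by a countable union of near-maximizers, whereas the paper builds the sequence $\mu_k$ greedily, at each step choosing $\mu_k$ to (approximately) maximize the new $\rho$-mass $\rho(T_{\mu_k}\setminus T_{k-1})$ and then arguing that these increments must tend to zero. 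Both are standard and valid; your version is a bit slicker for this theorem in isolation, while the paper's greedy construction is deliberately chosen so that its structure carries over to the much harder proof of Theorem~\ref{th:2} (expected average KL divergence), where there are no genuine densities and one must repeat a similar covering argument at every finite horizon $n$.
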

This relatively simple fact can be proven in different ways, relying on the mentioned equivalence  \cite{Blackwell:62, Kalai:94} 
of the statements ``$\rho$ predicts $\mu$ in total variation distance'' and ``$\mu$ is absolutely continuous
with respect to $\rho$.'' The proof presented below is not the shortest possible, but it uses ideas and techniques that are then generalized to 
the case of  prediction in expected average KL-divergence, which is more involved, since in all interesting cases 
all measures $\mu\in\C$ are singular with respect to any predictor that predicts all of them. 
Another proof of Theorem~\ref{th:1} can be obtained from Theorem~\ref{th:sep1} in the next section. Yet another way would
be to derive it from algebraic properties of the relation of absolute continuity, given in \cite{Plesner:46}.
\begin{proof}
We break the (relatively easy) proof of this theorem into 3 steps, which will make the 
proof of the next theorem more understandable.

\noindent{\em Step 1: densities.}
 For any $\mu\in\C$, since $\rho$ predicts $\mu$ in total variation, by Theorem~\ref{th:bd}, $\mu$ has a density (Radon-Nikodym derivative) $f_\mu$ with respect 
to $\rho$. Thus, for the set $T_\mu$ of all sequences $x_1,x_2,...\in\X^\infty$ on which $f_\mu(x_{1,2,\dots})>0$ 
(the limit $\lim_{n\rightarrow\infty}\frac {\rho(x_{1..n})}{\mu(x_{1..n})}$ 
exists and is finite and positive) we have $\mu(T_\mu)=1$ and $\rho(T_\mu)>0$. Next we will construct a sequence of measures $\mu_k\in\C$, $k\in\N$ such that 
the union of the sets $T_{\mu_k}$ has probability 1 with respect to every $\mu\in\C$, and will show that this is a sequence of measures whose existence is asserted in the theorem statement.

{\em Step 2: a countable cover and the resulting predictor.}
Let $\epsilon_k:=2^{-k}$ and let $m_1:=\sup_{\mu\in\C}\rho(T_\mu)$. Clearly, $m_1>0$. Find any $\mu_1\in\C$ such that $\rho(T_{\mu_1})\ge m_1-\epsilon_1$, and let
$T_1=T_{\mu_1}$. For $k>1$ define $m_k:=\sup_{\mu\in\C}\rho(T_\mu\backslash T_{k-1})$. If $m_k=0$ then define $T_{k}:=T_{k-1}$, otherwise find any $\mu_k$ such 
that $\rho(T_{\mu_k}\backslash T_{k-1})\ge m_k-\epsilon_k$, and let $T_k:=T_{k-1}\cup T_{\mu_k}$. 
Define the predictor $\nu$ as $\nu:=\sum_{k\in\N}w_k\mu_k$.

{\em Step 3: $\nu$ predicts every $\mu\in\C$.}
Since the sets 
$T_1$, $T_2\backslash T_1,\dots, T_k\backslash T_{k-1},\dots$ are disjoint, 
we must have $\rho(T_{k}\backslash T_{k-1})\to0$, so that $m_k\to0$ (since $m_k\le\rho(T_{k}\backslash T_{k-1})+\epsilon_k\to0$).
Let 
$$
T:=\cup_{k\in\N} T_k.
$$ 
Fix any $\mu\in\C$. 
Suppose that  $\mu(T_{\mu} \backslash T)>0$. Since $\mu$ is absolutely continuous
with respect to $\rho$, we must have $\delta:=\rho(T_{\mu}\backslash T)>0$. Then for every $k>1$ we have
$$m_k=\sup_{\mu'\in\C}\rho(T_{\mu'}\backslash T_{k-1})\ge  \rho(T_{\mu}\backslash T_{k-1})\ge\rho(T_{\mu}\backslash T)=\delta>0,$$ which contradicts 
$m_k\rightarrow0$. Thus, we have shown that 
\begin{equation}\label{eq:mt}
\mu(T\cap T_{\mu})=1.
\end{equation}

Let us show that every $\mu\in\C$ is absolutely continuous with respect to $\nu$.
Indeed, fix any $\mu\in\C$ and suppose $\mu(A)>0$ for some $A\in\mathcal F$.
Then from~(\ref{eq:mt}) we have $\mu(A\cap T)>0$, and, by absolute continuity of $\mu$ with respect to $\rho$,
also $\rho(A\cap T)>0$. Since $T=\cup_{k\in\N}T_{k}$ we must have $\rho(A\cap T_k)>0$ for some $k\in\N$.
Since on the set $T_k$ the measure $\mu_k$ has non-zero density $f_{\mu_k}$ with respect to $\rho$, we must have $\mu_k(A\cap T_k)>0$.
(Indeed, $\mu_k(A\cap T_k)=\int_{A\cap T_k}f_{\mu_k}d\rho>0$.)
Hence,  
$$
\nu(A\cap T_k)\ge w_k \mu_k(A\cap T_k)> 0,
$$ so that $\nu(A)>0$.
Thus, $\mu$ is absolutely continuous with respect to $\nu$, and so, by Theorem~\ref{th:bd}, $\nu$ predicts $\mu$ in total variation distance.
\end{proof}

Thus, examples of families $\C$ for which there is a $\rho$ that predicts every $\mu\in\C$ in total variation, are
limited to families of measures which have a density with respect to some measure $\rho$.  On the one hand, from statistical 
point of view,  such families
are rather large: the assumption that the probabilistic law in question has a density with respect to some (nice) measure 
is  a standard one in statistics. It should also be mentioned that  such families can easily be uncountable. 
On the other hand, even such basic examples as the set of all Bernoulli i.i.d.\ measures does not allow for a predictor 
that predicts every measure in total variation. Indeed, all these processes are singular with respect to one another; in particular, 
each of the non-overlapping sets $T_p$ of all sequences which have limiting fraction $p$ of 0s has probability 1 with respect to one of the measures and 
0 with respect to all others; since there are uncountably many of these measures, there is no measure $\rho$ with respect to which they all would have a density 
(since such a measure should have $\rho(T_p)>0$ for all $p$) .
As it was mentioned,  predicting in total variation distance means predicting with arbitrarily growing horizon  \cite{Kalai:94}, 
while prediction in expected average KL divergence is only concerned with  the probabilities of  the next observation, and
only on time and data average. For the latter measure of prediction quality, consistent predictors exist not only for the class of all Bernoulli 
processes, but also for  the class of all stationary processes \cite{BRyabko:88}. The next theorem establishes the result similar to Theorem~\ref{th:1} 
for expected average KL divergence.

\begin{theorem}\label{th:2} Let $\C$ be a set of probability measures on $(\X^\infty,\mathcal F)$. If there is a measure $\rho$ such that $\rho$ predicts every $\mu\in\C$ in 
expected average KL divergence, then there exist a sequence $\mu_k\in\C$, $k\in\N$ and a sequence $w_k>0,k\in\N$,  such that $\sum_{k,\in\N} w_k=1$, 
and the measure $\nu:=\sum_{k\in\N} w_k\mu_k$ predicts every $\mu\in\C$ in expected average KL divergence.
\end{theorem}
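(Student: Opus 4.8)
The plan is to imitate the three-step structure of the proof of Theorem~\ref{th:1}, but to replace the single Radon--Nikodym density, which no longer exists, by the family of ``per-step densities'' $\rho(x_{1..n})/\mu(x_{1..n})$ on the finite spaces $\X^n$. The starting observation is that $\frac1n d_n(\mu,\rho)\to0$ together with the identity~(\ref{eq:kl}) forces $\mu^n\ll\rho^n$ (otherwise $d_n(\mu,\rho)=\infty$), so on each $\X^n$ the ratio $\rho(x_{1..n})/\mu(x_{1..n})$ is finite $\mu$-a.s.\ and can play the role of a density. I fix a sequence $\delta_n\to0$ with $n\delta_n\to\infty$ (e.g.\ $\delta_n=n^{-1/2}$) and define the good set $T_\mu^n:=\{x_{1..n}:\rho(x_{1..n})\le 2^{n\delta_n}\mu(x_{1..n})\}$. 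Writing $d_n(\mu,\rho)=\E_\mu\log\frac{\mu(x_{1..n})}{\rho(x_{1..n})}$ and using $\ln t\le t-1$ with $\sum_{x_{1..n}}\rho(x_{1..n})\le1$ gives $\E_\mu(\log\frac{\rho}{\mu})^+\le1$, so by Markov's inequality $\mu(\X^n\setminus T_\mu^n)\le\beta_n$ with $\beta_n=O(1/(n\delta_n))\to0$, and crucially this bound is uniform over all $\mu\in\C$.

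For each fixed $n$ I would then build a finite mixture $\nu_n$ of elements of $\C$ exactly as in Step~2 of Theorem~\ref{th:1}, but greedily covering the sets $T_\mu^n$ in $\rho$-measure: choose $\mu_k^{(n)}\in\C$ maximizing (up to $\epsilon_k$) the newly covered mass $\rho(T_{\mu_k^{(n)}}^n\setminus T_{k-1})$. Here the finiteness of $\X^n$ is what makes the argument quantitative: since the increments $\rho(T_k\setminus T_{k-1})$ are disjoint and sum to at most one, only $O(1/\eta)$ measures are needed to drive the residual supremum $m_k$ below any $\eta>0$; taking $\eta=2^{-n\delta_n}\beta_n$ and using $\rho(A)\le 2^{n\delta_n}\mu(A)$ on $A\subseteq T_\mu^n$ shows that after $K_n=O(2^{n\delta_n}/\beta_n)=2^{o(n)}$ steps the still-uncovered part of every $\mu$'s good set has $\mu$-measure at most $\beta_n$. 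I assign the first $K_n$ chosen measures equal weights $1/(2K_n)$ and put the remaining weight $\tfrac12$ on a crude full-support component: for each of the at most $|\X|^n$ sequences $x_{1..n}$ receiving positive probability under some measure in $\C$, include one measure nearly attaining $\sup_{\mu\in\C}\mu(x_{1..n})$, so that $\nu_n(x_{1..n})\ge c_n\,\mu(x_{1..n})$ for every $\mu$ and every such $x_{1..n}$, with $\log(1/c_n)=O(n)$. This component guarantees $d_n(\mu,\nu_n)<\infty$ and a per-sequence bound of order $n$ on the complement.

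With $\nu_n$ in hand the per-level divergence splits over the region $C^n$ of good sequences covered within the first $K_n$ steps and its complement (of $\mu$-measure at most $2\beta_n$). On $C^n$ the covering measure gives $\nu_n(x_{1..n})\ge\frac1{2K_n}2^{-n\delta_n}\rho(x_{1..n})$, so the summand is at most $\log\frac{\mu}{\rho}+n\delta_n+\log(2K_n)$ and contributes $d_n(\mu,\rho)+O(n\delta_n+\log K_n)=d_n(\mu,\rho)+o(n)$; on the complement the crude bound contributes $O(n\beta_n)=o(n)$. Hence $\frac1n d_n(\mu,\nu_n)\to0$ for every $\mu\in\C$. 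Finally I combine the levels: set $\nu:=\sum_n\gamma_n\nu_n$ with $\gamma_n>0$, $\sum_n\gamma_n=1$ and $\frac1n\log(1/\gamma_n)\to0$ (say $\gamma_n\propto n^{-2}$). Since $\nu(x_{1..n})\ge\gamma_n\nu_n(x_{1..n})$, the identity~(\ref{eq:kl}) yields $\frac1n d_n(\mu,\nu)\le\frac1n d_n(\mu,\nu_n)+\frac1n\log(1/\gamma_n)\to0$. As $\nu$ is a countable convex combination of measures from $\C$, relabelling its components and weights gives the assertion.

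The main obstacle is precisely the quantitative control that is absent in Theorem~\ref{th:1}: turning the purely qualitative covering of the sets $T_\mu^n$ into a bound in which the weights cost only $o(n)$ in the logarithm. This hinges on the subexponential bound $K_n=2^{o(n)}$ on the number of measures needed to capture all but a $\beta_n$-fraction of each $\mu$'s good-set mass, which in turn rests on the uniform Markov estimate $\mu(\X^n\setminus T_\mu^n)\le\beta_n$ (uniform because it uses only $\sum_{x_{1..n}}\rho(x_{1..n})\le1$, never a $\mu$-dependent rate of convergence) and on the conversion between $\rho$- and $\mu$-measure afforded by the threshold $2^{n\delta_n}$. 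Choosing $\delta_n$ is a balancing act: it must vanish so that the term $n\delta_n$ does not spoil the normalized KL bound, yet $n\delta_n\to\infty$ so that $\beta_n\to0$ and $K_n$ stays subexponential. Ensuring positivity of $\nu_n$ wherever it is needed, handled here by the crude full-support component, is a minor but necessary point so that the complement contributes a finite, $o(n)$ amount.
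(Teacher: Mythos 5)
Your overall architecture mirrors the paper's: per-level ``good sets'' where $\mu$ dominates a fraction of $\rho$, a greedy finite cover of $\X^n$ by such sets, a crude full-support component to keep all divergences finite, and a subexponentially weighted sum over $n$. But there is a genuine gap at the single most delicate point: the passage from a $\rho$-bound to a $\mu$-bound on the uncovered part of the good set. Your good set is $T_\mu^n=\{\rho(x_{1..n})\le 2^{n\delta_n}\mu(x_{1..n})\}$, so for $A\subseteq T_\mu^n$ the defining inequality gives $\mu(A)\ge 2^{-n\delta_n}\rho(A)$ --- a \emph{lower} bound on $\mu(A)$. From the greedy construction you only know that the uncovered residue $A=T_\mu^n\setminus T_{K_n}^n$ has small $\rho$-measure ($\le\eta$), and the inequality you invoke cannot convert this into the upper bound $\mu(A)\le\beta_n$ that your argument needs; it runs in the wrong direction. (You cannot repair this by flipping the definition of $T_\mu^n$ to $\{\mu\le 2^{n\delta_n}\rho\}$ either: then the Markov bound $\mu(\X^n\setminus T_\mu^n)=o(1)$ is no longer uniform --- it needs $n\delta_n\gg d_n(\mu,\rho)$, hence a $\mu$-dependent threshold --- and, worse, the covering property $\nu_n\ge w_k\,2^{-n\delta_n}\rho$ on covered points is lost, since membership in $T_{\mu'}^n$ would then only upper-bound $\mu'$.) Without $\mu(A)=o(1)$ your final estimate collapses: on the uncovered region the only available lower bound on $\nu_n$ is the crude component with $\log(1/c_n)=\Theta(n)$, so its contribution is $\mu(A)\cdot\Theta(n)$, which is $o(n)$ only if $\mu(A)\to0$; and your claim that the covered region contributes $d_n(\mu,\rho)+o(n)$ also tacitly uses $\mu$-smallness of the complement, since restricting the sum $\sum\mu\log(\mu/\rho)$ to a subset can increase it by the (possibly linear) entropy of the discarded part.

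The paper fills exactly this hole with a separate argument that uses the hypothesis $d_n(\mu,\rho)=o(n)$ a \emph{second} time: decomposing $d_n(\mu,\rho)$ as in~(\ref{eq:mut}) and applying Jensen's inequality~(\ref{eq:jen}) to the uncovered piece yields~(\ref{eq:mu2}), i.e.\ $\mu(T_\mu^n\setminus T^n_{j_\mu^n})\le\bigl(d_n(\mu,\rho)+\log n+O(1)\bigr)/(-\log\epsilon_\mu^n)$ --- in words, a set of small $\rho$-measure cannot carry non-vanishing $\mu$-mass unless the KL divergence is large. This forces the cutoff $\epsilon_\mu^n$ (equivalently, how deep into the cover one must go) to depend on $\mu$, balanced so that $-\log\epsilon_\mu^n=o(n)$ while still dominating $d_n(\mu,\rho)+\log n$; the paper therefore keeps decaying weights $w_k\propto k^{-2}$ inside each $\nu_n$ rather than your uniform weights over a fixed $K_n$. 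You need to add this Jensen step (or an equivalent) to your write-up; everything else in your proposal --- the uniform Markov estimate, the $O(1/\eta)$ count for the greedy cover, the regularizing component built from near-maximizers of $\sup_{\mu\in\C}\mu(x_{1..n})$, and the final weighting over $n$ --- is sound and matches the paper's Steps 1, 2n, r and 2.
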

A difference worth noting with respect to the formulation of Theorem~\ref{th:1} (apart from a different measure of divergence) is in that 
in the latter the weights $w_k$ can be chosen arbitrarily, while in Theorem~\ref{th:2} this is not the case.
In general, the statement ``$\sum_{k\in\N} w_k\nu_k$ predicts $\mu$ in expected average KL divergence for some choice of $w_k$, $k\in\N$''  does not imply
``$\sum_{k\in\N} w'_k\nu_k$ predicts $\mu$ in expected average KL divergence for every  summable sequence of positive $w_k', k\in\N$,'' while
the implication trivially holds true if the expected average KL divergence is replaced by the total variation. This is illustrated in 
the last example of this section.
An interesting related question (which is beyond the scope of this paper) is how to chose the weights to optimize the behaviour of a predictor before asymptotic.

The idea of the proof of Theorem~\ref{th:2} is as follows. For every $\mu$ and every $n$ we consider the sets $T_\mu^n$ of those $x_{1..n}$ on which $\mu$ is greater 
than $\rho$. These sets have to have (from some $n$ on) a high  probability with respect to $\mu$. Then since $\rho$ predicts $\mu$ in 
expected average KL divergence, the $\rho$-probability of these sets cannot decrease exponentially fast (that is, it has to be quite large).
(The sequences  $\mu(x_{1..n})/\rho(x_{1..n})$, $n\in\N$ will play the role of densities of the proof of Theorem~\ref{th:1}, and the sets $T_\mu^n$
the role of sets $T_\mu$ on which the density is non-zero.)
We then use, for each given $n$, the same scheme to cover the set $\X^n$ with countably many $T_\mu^n$, as  was used in the proof of Theorem~\ref{th:1} 
to construct a countable covering of the set $\X^\infty$ , obtaining for each $n$ a predictor
$\nu_n$. Then the predictor $\nu$ is obtained as $\sum_{n\in\N} w_n \nu_n$, where the weights decrease subexponentially.
The latter fact ensures that, although the weights depend on $n$, they still play no role  asymptotically.
The technically most involved part of the proof is to show that the sets $T_\mu^n$ in asymptotic have sufficiently 
large weights in those
countable covers that we construct for each $n$. This is used to  demonstrate  the implication ``if a set has a high $\mu$ probability,
then its  $\rho$-probability does not decrease too fast, provided some regularity conditions.''
The proof is broken into the same steps as the (simpler) proof of Theorem~\ref{th:1}, to make the analogy explicit and the proof more understandable.
\begin{proof}
Define the weights $w_k:=wk^{-2}$, where $w$ is the normalizer $6/\pi^2$.

\noindent{\em Step 1: densities.} 
Define the sets 
\begin{equation}\label{eq:t}
T_\mu^n:=\left\{x_{1..n}\in \X^n: \mu(x_{1..n})\ge{1\over n}\rho(x_{1..n})\right\}.
\end{equation} 
Using Markov's inequality, we derive 
\begin{equation}\label{eq:mark}
\mu(\X^n\backslash T_\mu^n) 
 = \mu \left(\frac {\rho(x_{1..n})}{\mu(x_{1..n})} > n\right)\le {1\over n} E_\mu \frac {\rho(x_{1..n})}{\mu(x_{1..n})}={1\over n},
\end{equation}
so that $\mu(T_\mu^n)\to 1$.
(Note that  if $\mu$ is singular with respect to $\rho$, as is typically the case,  then $\frac{\rho(x_{1..n})}{\mu(x_{1..n})}$ converges to 0 $\mu$-a.e. and one
can replace ${1\over n}$ in~(\ref{eq:t}) by 1, while still having $\mu(T_\mu^n)\to1$.)

{\em Step 2n: a countable cover, time $n$.}
Fix an $n\in\N$. Define $m^n_1:=\max_{\mu\in\C}\rho(T_\mu^n)$ (since $\X^n$ are finite all suprema are reached). 
 Find any $\mu^n_1$ such that $\rho^n_1(T_{\mu^n_1}^n)=m^n_1$ and let
$T^n_1:=T^n_{\mu^n_1}$. For $k>1$, let $m^n_k:=\max_{\mu\in\C}\rho(T_\mu^n\backslash T^n_{k-1})$. If $m^n_k>0$, let $\mu^n_k$ be any $\mu\in\C$ such 
that $\rho(T_{\mu^n_k}^n\backslash T^n_{k-1})=m^n_k$, and let $T^n_k:=T^n_{k-1}\cup T^n_{\mu^n_k}$; otherwise let $T_k^n:=T_{k-1}^n$. Observe that 
(for each $n$) there is only a finite number of positive $m_k^n$,
since the set $\X^n$ is finite; let $K_n$ be the largest index $k$ such that $m_k^n>0$. Let 
\begin{equation}\label{eq:nun}
\nu_n:=\sum_{k=1}^{K_n} w_k\mu^n_k.
\end{equation}
As a result of this construction, for every $n\in\N$ every $k\le K_n$ and every  $x_{1..n}\in T^n_k$ 
using~(\ref{eq:t}) we obtain
\begin{equation}\label{eq:ext}
\nu_n(x_{1..n})\ge w_k{1\over n}\rho(x_{1..n}).
\end{equation}

{\em Step 2: the resulting predictor.}
Finally, define 
\begin{equation}\label{eq:nu}
\nu:={1\over 2}\gamma+{1\over2}\sum_{n\in\N}w_n\nu_n,
\end{equation}
 where $\gamma$ is the i.i.d. measure with equal probabilities of all $x\in\X$ 
(that is, $\gamma(x_{1..n})=|\X|^{-n}$ for every $n\in\N$ and every $x_{1..n}\in\X^n$). 
We will show that $\nu$  predicts every $\mu\in\C$, and 
then in the end of the proof (Step~r) we will show how to replace $\gamma$ by a combination of a countable set of elements of $\C$ (in fact, $\gamma$ 
is just a regularizer which ensures that $\nu$-probability of any word is never too close to~0). 

{\em Step 3: $\nu$ predicts every $\mu\in\C$.}
Fix any $\mu\in\C$. 
Introduce the parameters $\epsilon_\mu^n\in(0,1)$, $n\in\N$, to be defined later, and let $j_\mu^n:=1/\epsilon_\mu^n$.
Observe that $\rho(T^n_k\backslash T^n_{k-1})\ge \rho(T^n_{k+1}\backslash T^n_k)$, for any $k>1$ and any $n\in\N$, by definition of these sets.
Since the sets $T^n_k\backslash T^n_{k-1}$, $k\in\N$ are disjoint, we obtain $\rho(T^n_k\backslash T^n_{k-1})\le 1/k$. Hence,  $\rho(T_\mu^n\backslash T_j^n)\le \epsilon_\mu^n$ for some $j\le j_\mu^n$,
since  otherwise $m^n_j=\max_{\mu\in\C}\rho(T_\mu^n\backslash T^n_{j_\mu^n})> \epsilon_\mu^n$ so that  $\rho(T_{j_\mu^n+1}^n\backslash T^n_{j_\mu^n}) > \epsilon_\mu^n=1/j_\mu^n$, which is a contradiction. 
Thus,   
\begin{equation}\label{eq:tm}
\rho(T_\mu^n\backslash T_{j_\mu^n}^n)\le \epsilon_\mu^n.
\end{equation}
We can upper-bound $\mu(T_\mu^n\backslash T^n_{j^n_\mu})$ as follows. 
First, observe that
\begin{multline}\label{eq:mut}
d_n(\mu,\rho) 
=   -\sum_{x_{1..n}\in T^n_\mu\cap T^n_{j^n_\mu}}\mu(x_{1..n})\log\frac{\rho(x_{1..n})}{\mu(x_{1..n})} 
\\
-\sum_{x_{1..n}\in T^n_\mu\backslash  T^n_{j^n_\mu}}\mu(x_{1..n})\log\frac{\rho(x_{1..n})}{\mu(x_{1..n})} \\- \sum_{x_{1..n}\in \X^n\backslash T^n_\mu}\mu(x_{1..n})\log\frac{\rho(x_{1..n})}{\mu(x_{1..n})}
\\
=
I+II+III.
\end{multline}
Then, from~(\ref{eq:t}) we get 
\begin{equation}\label{eq:e1}
I\ge -\log n.
\end{equation}
Observe that for every $n\in\N$ and every set $A\subset \X^n$, using Jensen's inequality we can obtain
\begin{multline}\label{eq:jen}
-\sum_{x_{1..n}\in A}\mu(x_{1..n})\log\frac{\rho(x_{1..n})}{\mu(x_{1..n})}
=  -\mu(A)\sum_{x_{1..n}\in A}{1\over\mu(A)}\mu(x_{1..n})\log\frac{\rho(x_{1..n})}{\mu(x_{1..n})}
\\
\ge -\mu(A)\log{\rho(A)\over\mu(A)} \ge -\mu(A)\log\rho(A) -{1\over2}. 
\end{multline}
Thus, from~(\ref{eq:jen}) and~(\ref{eq:tm})
we get 
\begin{equation}\label{eq:e2}
II
\ge  -\mu(T_\mu^n\backslash T^n_{j^n_\mu}) \log\rho(T_\mu^n\backslash T^n_{j^n_\mu})- 1/2
\ge -\mu(T_\mu^n\backslash T^n_{j^n_\mu}) \log \epsilon_\mu^n - 1/2.
\end{equation}
Furthermore,  
\begin{multline}\label{eq:e3}
III\ge \sum_{x_{1..n}\in \X^n\backslash T^n_\mu}\mu(x_{1..n})\log\mu(x_{1..n}) 
\ge \mu(\X^n\backslash T^n_\mu)\log\frac{\mu(\X^n\backslash T^n_\mu)}{|\X^n\backslash T^n_\mu|}\\\ge -{1\over2} - \mu(\X^n\backslash T^n_\mu)n\log|\X|\ge-{1\over2} -\log|\X|,
\end{multline} 
where in the second inequality we have used the fact that entropy is maximized when all events are equiprobable, in the third one we used $|\X^n\backslash T^n_\mu|\le|\X|^n$,
while the last inequality follows from~(\ref{eq:mark}).
Combining~(\ref{eq:mut}) with the bounds~(\ref{eq:e1}), (\ref{eq:e2}) and~(\ref{eq:e3})  we obtain 
\begin{equation*}
d_n(\mu,\rho) \ge -\log n  -\mu(T_\mu^n\backslash T^n_{j^n_\mu}) \log \epsilon_\mu^n  - 1 - 
  \log|\X|,
\end{equation*}
so that
\begin{equation}\label{eq:mu2}
 \mu(T_\mu^n\backslash T^n_{j^n_\mu}) \le {1\over-\log \epsilon_\mu^n}\Big(d_n(\mu,\rho) +\log n  +1 + 
     \log|\X|\Big).
\end{equation}
Since $d_n(\mu,\rho)=o(n)$, 
 we can define the parameters $\epsilon^n_\mu$ in such a way that $-\log \epsilon^n_\mu=o(n)$ while
at the same time  the bound~(\ref{eq:mu2}) gives $\mu(T_\mu^n\backslash T^n_{j^n_\mu})=o(1)$. Fix such a choice of $\epsilon^n_\mu$.
Then,   using $\mu(T^n_\mu)\to1$,  we can conclude
\begin{equation}\label{eq:xt}
\mu(\X^n\backslash T^n_{j^n_\mu})\le \mu(\X^n\backslash T^n_{\mu})+ \mu(T^n_{\mu}\backslash T^n_{j^n_\mu}) =o(1).
\end{equation}

We proceed with the proof of $d_n(\mu,\nu)=o(n)$. 
For any $x_{1..n}\in T^n_{j_\mu^n}$  we have
\begin{equation}\label{eq:i}
\nu(x_{1..n})\ge {1\over 2}w_n\nu_n(x_{1..n})
\ge{1\over 2}w_n w_{j_{\mu}^n} {1\over n}\rho(x_{1..n})=\frac{w_nw}{2n}(\epsilon_\mu^n)^2\rho(x_{1..n}),
\end{equation}
where the first inequality follows from~(\ref{eq:nu}), the second from~(\ref{eq:ext}), and in the equality we have used $w_{j_{\mu}^n}=w/(j_{\mu}^n)^2$
and  $j_\mu^n=1/\epsilon^\mu_n$.
 Next we use the decomposition
\begin{equation}\label{eq:12}
d_n(\mu,\nu)= -\sum_{x_{1..n}\in T^n_{j_\mu^n}}\mu(x_{1..n})\log\frac{\nu(x_{1..n})}{\mu(x_{1..n})} \\ 
- \sum_{x_{1..n}\in \X^n\backslash T^n_{j_\mu^n}}\mu(x_{1..n})\log\frac{\nu(x_{1..n})}{\mu(x_{1..n})}  = I + II.
\end{equation}
From~(\ref{eq:i})  we find 
\begin{multline}\label{eq:1}
I\le -\log\left(\frac{w_nw}{2n}(\epsilon_\mu^n)^2\right)  - \sum_{x_{1..n}\in T^n_{j_\mu^n}}\mu(x_{1..n})\log\frac{\rho(x_{1..n})}{\mu(x_{1..n})}\\
=
(1+3\log n - 2\log\epsilon_\mu^n-2\log w) +\left(d_n(\mu,\rho)+ \sum_{x_{1..n}\in\X^n\backslash T^n_{j_\mu^n}}\mu(x_{1..n})\log\frac{\rho(x_{1..n})}{\mu(x_{1..n})}\right)
\\
\le o(n) -  \sum_{x_{1..n}\in\X^n\backslash T^n_{j_\mu^n}}\mu(x_{1..n})\log\mu(x_{1..n})\\ 
\le o(n)+\mu(\X^n\backslash T^n_{j_\mu^n})n\log|\X|=o(n),
\end{multline}
where in the second inequality we have used $-\log\epsilon_\mu^n=o(n)$ and $d_n(\mu,\rho)=o(n)$, in the last inequality we have again used the fact that the entropy is maximized when all events are equiprobable, 
while the last equality follows from~(\ref{eq:xt}). 
Moreover, from~(\ref{eq:nu}) we find
\begin{equation}\label{eq:2}
II\le \log 2 - \sum_{x_{1..n}\in\X^n\backslash  T^n_{j_\mu^n}}\mu(x_{1..n})\log\frac{\gamma(x_{1..n})}{\mu(x_{1..n})}
\le 1 +n\mu(\X^n\backslash T^n_{j_\mu^n})\log|\X|=o(n),
\end{equation}
where in the last inequality we have used $\gamma(x_{1..n})=|\X|^{-n}$ and $\mu(x_{1..n})\le 1$, and the last equality follows from~(\ref{eq:xt}).

From~(\ref{eq:12}), (\ref{eq:1}) and~(\ref{eq:2}) we conclude ${1\over n}d_n(\nu,\mu)\to0$.

{\em Step r: the regularizer $\gamma$}. It remains to show that the  i.i.d. regularizer $\gamma$ in the definition of $\nu$~(\ref{eq:nu}), can be replaced by a convex combination of a countably many elements from $\C$.
Indeed, for each $n\in\N$, denote
$$
A_n:=\{x_{1..n}\in \X^n: \exists\mu\in\C\ \mu(x_{1..n})\ne0\},
$$ and let for each $x_{1..n}\in \X^n$ the measure $\mu_{x_{1..n}}$ be any measure from $\C$ such that $\mu_{x_{1..n}}(x_{1..n})\ge{1\over2}\sup_{\mu\in\C}\mu(x_{1..n})$.
Define 
$$
 \gamma_n'(x'_{1..n}):={1\over |A_n|}\sum_{x_{1..n}\in A_n}\mu_{x_{1..n}}(x'_{1..n}),
$$ for each
$x'_{1..n}\in A^n$, $n\in\N$, and let  $\gamma':=\sum_{k\in\N}w_k\gamma'_k$. 
For every $\mu\in\C$ we have 
$$
\gamma'(x_{1..n})\ge w_n|A_n|^{-1} \mu_{x_{1..n}}(x_{1..n})\ge{1\over2} w_n |\X|^{-n} \mu(x_{1..n})
$$ for
every $n\in\N$ and every $x_{1..n}\in A_n$, which clearly suffices to establish the bound $II=o(n)$ as in~(\ref{eq:2}).
\end{proof}

{\noindent \bf Example: countable classes} of measures. A very simple but  rich  example of a class $\C$ that satisfies
the conditions of both the theorems above, is any countable family $\C=\{\mu_k: k\in\N\}$ of measures. 
In this case, any mixture predictor $\rho:=\sum_{k\in\N} w_k\mu_k$ predicts all $\mu\in\C$ both in total variation 
and in expected average KL divergence. A particular instance that has gained much attention in the literature is the 
family of all computable measures. Although countable, this family of processes is rather rich. The problem 
of predicting all computable measures was introduced in \cite{Solomonoff:78} where a mixture predictor was proposed.

{\noindent \bf Example: Bernoulli i.i.d.\ processes.} Consider the class $\C_B=\{\mu_p: p\in[0,1]\}$ of 
all Bernoulli i.i.d.\ processes: $\mu_p(x_k=0)=p$ independently for all $k\in\N$. Clearly, this family is uncountable.
Moreover, each set $$T_p:=\{x\in \X^\infty:\text{ the limiting fraction of 0s in $x$ equals }p\},$$ 
 has probability 1 with respect 
to $\mu_p$ and probability 0 with respect to any $\mu_{p'}:p'\ne p$. Since the sets $T_p$, $p\in[0,1]$ are non-overlapping, there is no measure $\rho$ for which $\rho(T_p)>0$ for all $p\in[0,1]$.
That is, there is no measure $\rho$ with respect to which all $\mu_p$ are absolutely continuous.
Therefore, by Theorem~\ref{th:bd}, a predictor that predicts any $\mu\in\C_B$  in total variation does not exist, demonstrating that this notion
of prediction is rather strong. However, we know (e.g. \cite{Krichevsky:93}) that the Laplace predictor~(\ref{eq:lapl}) predicts
every Bernoulli i.i.d. process in expected average KL divergence (and not only).
Hence, Theorem~\ref{th:1} implies that there is a countable mixture predictor for this family too.
Let us find such a predictor. Let $\mu_q:q\in Q$ be the family of all Bernoulli i.i.d. measures with rational probability of 0, 
and let $\rho:=\sum_{q\in Q} w_q\mu_q$, where $w_q$ are arbitrary positive weights that sum to 1. Let $\mu_p$ be  any Bernoulli i.i.d.
process. Let $h(p,q)$ denote the divergence $p\log(p/q)+(1-p)\log(1-p/1-q)$. For each $\epsilon$ we can find a $q\in Q$ such that $h(p,q)<\epsilon$. Then 
\begin{multline}\label{eq:bern}
 {1\over n} d_n(\mu_p,\rho)={1\over n}\E_{\mu_p}\log\frac{\log\mu_p(x_{1..n})}{\log\rho(x_{1..n})}\le {1\over n}\E_{\mu_p}\log\frac{\log\mu_p(x_{1..n})}{w_q\log\mu_q(x_{1..n})} 
\\=-\frac{\log w_q}{n} +  h(p,q)\le \epsilon+ o(1). 
\end{multline}
Since this holds for each $\epsilon$ we conclude that ${1\over n} d_n(\mu_p,\rho)\to0$ and $\rho$ predicts every $\mu\in\C_B$ in expected average KL divergence.

{\noindent \bf Example: stationary processes.} In \cite{BRyabko:88}  a predictor $\rho_R$ was constructed which predicts every stationary 
process $\rho\in\C_S$ in expected average KL divergence. (This predictor is obtained as a mixture of
predictors for $k$-order Markov sources, for all $k\in\N$.) Therefore, Theorem~\ref{th:2} implies that there is also a countable mixture predictor
for this family of processes. Such a predictor can be constructed as follows (the proof in this example is based on the proof in  \cite{BRyabko:06}, Appendix~1).
 Observe that the family $\C_k$ of
 $k$-order stationary binary-valued Markov
processes is parametrized by $2^k$ $[0,1]$-valued parameters: probability of observing 0 after observing $x_{1..k}$, for each $x_{1..k}\in\X^k$.
For each $k\in\N$ let $\mu^k_q$, $q\in Q^{2^k}$ be the (countable) family of all stationary $k$-order Markov processes with rational 
values of all the parameters. We will show that any predictor $\nu:=\sum_{k\in\N}\sum_{q\in Q^{2^k}} w_k w_q \mu^k_q$, where $w_k$, $k\in\N$ and
$w_q, q\in Q^{2^k}$, $k\in\N$ are any sequences of positive real weights that sum to 1, predicts every stationary $\mu\in\C_S$ in expected average 
KL divergence. 
For $\mu\in\C_S$ and $k\in\N$ define the $k$-order conditional Shannon entropy $h_k(\mu):=\E_\mu\log\mu(x_{k+1}|x_{1..k})$.
We have $h_{k+1}(\mu)\ge h_k(\mu)$ for every $k\in\N$ and $\mu\in\C_S$, and the limit 
\begin{equation}\label{eq:hlim}
h_\infty(\mu):=\lim_{k\to\infty}h_k(\mu)
\end{equation}
is called the limit Shannon entropy, see e.g. \cite{Gallager:68}.
Fix some $\mu\in\C_S$. It is easy to see  that for  every $\epsilon>0$ and every $k\in\N$ we can find a $k$-order stationary
 Markov measure $\mu^k_{q_\epsilon}$, 
$q_\epsilon\in Q^{2^k}$ with rational values of the parameters, such that 
\begin{equation}\label{eq:muq}
\E_\mu\log\frac{\mu(x_{k+1}|x_{1..k})}{\mu^k_{q_\epsilon}(x_{k+1}|x_{1..k})}<\epsilon.
\end{equation}
We have 
\begin{multline}\label{eq:must}
 {1\over n} d_n(\mu,\nu) \le -\frac{\log w_k w_{q_\epsilon}}{n} + {1\over n} d_n(\mu,\mu^k_{q_\epsilon})\\ 
 =O(k/n) + {1\over n} \E_\mu\log\mu(x_{1..n}) - {1\over n} \E_\mu\log\mu^k_{q_\epsilon}(x_{1..n}) \\ 
= o(1)+ h_\infty(\mu) - {1\over n} \E_\mu\sum_{k=1}^n\log\mu^k_{q_\epsilon}(x_t|x_{1..t-1})
 \\ =  o(1)+ h_\infty(\mu) -  {1\over n} \E_\mu\sum_{t=1}^{k}\log\mu^k_{q_\epsilon}(x_t|x_{1..t-1})  - \frac{n-k}{n}\E_\mu\log\mu^k_{q_\epsilon}(x_{k+1}|x_{1..k})
\\ \le o(1)+ h_\infty(\mu)- \frac{n-k}{n}(h_k(\mu)- \epsilon),
\end{multline}
where the first inequality is derived analogously to~(\ref{eq:bern}), the first equality follows from~(\ref{eq:kl}), the second
equality follows from the Shannon-McMillan-Breiman theorem (e.g. \cite{Gallager:68}), that states that ${1\over n}\log\mu(x_{1..n})\to h_\infty(\mu)$
  in expectation (and a.s.)
 for every $\mu\in\C_S$, and~(\ref{eq:kl}); in the third equality we have used the fact that $\mu^k_{q_\epsilon}$ is $k$-order Markov
and $\mu$ is stationary, whereas the last inequality follows from~(\ref{eq:muq}). Finally, since the choice of $k$ and $\epsilon$ was arbitrary, 
from~(\ref{eq:must}) and~(\ref{eq:hlim}) we
obtain $\lim_{n\to\infty}{1\over n} d_n(\mu,\nu)=0$.

{\noindent \bf Example: weights may matter.} Finally, we provide an example that illustrates the difference between the formulations of Theorems~\ref{th:1} and~\ref{th:2}:
in the latter the weights are not arbitrary. We will construct a sequence of measures $\nu_k, k\in\N$, a measure $\mu$, and two sequences of positive weights $w_k$ and $w_k'$
with $\sum_{k\in\N} w_k= \sum_{k\in\N} w_k'=1$, for which $\nu:=\sum_{k\in\N}w_k\nu_k$ predicts $\mu$ in expected average KL divergence, but $\nu':=\sum_{k\in\N}w'_k\nu_k$
does not. Let 
$\nu_k$ be a deterministic measure that first outputs $k$ 0s and then only 1s, $k\in\N$.  
Let $w_k=w/k^2$ with $w=6/\pi^2$ and $w_k'=2^{-k}$. Finally, let $\mu$ be a deterministic measure that outputs only 0s. 
We have $d_n(\mu,\nu)=- \log (\sum_{k\ge n} w_k)=O(\log n)$, but $d_n(\mu,\nu')=- \log (\sum_{k\ge n} w'_k)=-\log (2^{-n+1})=n-1\ne o(n)$,
proving the claim.

\section{Characterizing predictable classes}\label{s:ch}
Knowing that a mixture of a countable subset gives a predictor if there is one, a notion that naturally comes to mind 
when trying to characterize families of processes for which  a predictor exists, is separability.
Can we say that there is a predictor for a class $\C$ of measures if and only if $\C$ is separable? 
Of course, to talk about separability we need a suitable topology on the space of all measures, or at least on $\C$. If the formulated 
questions were to have a positive answer, we would need  a different topology for each of the notions 
of predictive quality that we consider. Sometimes these measures of predictive quality indeed define  a nice
enough structure of a probability space, but sometimes they do not. The question  whether there exists a 
topology on $\C$, separability with respect to which is equivalent to the existence of a predictor, is already more vague and less appealing.
Nonetheless, in the case of total variation distance we obviously have a candidate topology: that of total variation distance, 
and indeed separability with respect to this topology is equivalent to the existence of a predictor, as the next 
theorem shows. This theorem also implies Theorem~\ref{th:1}, thereby providing an alternative proof for the latter.
In the case of expected average KL divergence the situation is different. While one can introduce a topology based on it, 
separability with respect to this topology turns out to be a sufficient but not a necessary condition for the existence of a predictor, 
as is shown in Theorem~\ref{th:sep2}.
\subsection{Separability}\label{s:sep}
\begin{definition}[unconditional total variation distance]
 Introduce the  (unconditional) total variation distance 
$$
v(\mu,\rho):= \sup_{A\in\mathcal F} |\mu(A)-\rho(A)|.
$$
\end{definition}

\begin{theorem}\label{th:sep1}
 Let $\C$ be a set of probability measures on $(\X^\infty,\mathcal F)$. There is a measure $\rho$ such that $\rho$ predicts every $\mu\in\C$ in 
total variation if and only if $\mathcal C$ is separable with respect to the topology of total variation distance.
In this case any measure $\nu$ of the form $\nu=\sum_{k=1}^\infty w_k\mu_k$, where $\{\mu_k: k\in\N\}$ is any dense countable subset of $\C$ and $w_k$ are any positive
weights that sum to 1,  predicts every $\mu\in\C$ in total variation.
\end{theorem}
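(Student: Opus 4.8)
The plan is to prove the two directions of the equivalence separately, and to note that the sufficiency argument already delivers the explicit mixture form claimed in the final sentence.

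For the sufficiency direction (and, simultaneously, the concrete form of the predictor), suppose $\{\mu_k:k\in\N\}$ is a countable dense subset of $\C$ in the total variation distance $v$, and set $\nu:=\sum_{k\in\N}w_k\mu_k$ with $w_k>0$, $\sum_k w_k=1$. I would show directly that every $\mu\in\C$ is absolutely continuous with respect to $\nu$; Theorem~\ref{th:bd} then yields prediction in total variation. Fix $\mu\in\C$ and any $A\in\mathcal F$ with $\nu(A)=0$. Since $\nu(A)=\sum_k w_k\mu_k(A)$ is a sum of nonnegative terms with strictly positive weights, $\mu_k(A)=0$ for every $k$. By density choose $\mu_{k_j}$ with $v(\mu,\mu_{k_j})\to0$; then $\mu(A)=|\mu(A)-\mu_{k_j}(A)|\le v(\mu,\mu_{k_j})\to0$, whence $\mu(A)=0$. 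Thus $\mu\ll\nu$, as required. This simultaneously proves the ``if'' part of the equivalence, since it exhibits a predictor.

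The necessity direction is the crux. Assume some $\rho$ predicts every $\mu\in\C$ in total variation. By Theorem~\ref{th:bd} each $\mu\in\C$ is absolutely continuous with respect to $\rho$, hence admits a Radon--Nikodym density $f_\mu:=d\mu/d\rho\in L^1(\rho)$. The key tool is the classical identity relating the total variation of measures to the $L^1$ distance of their densities, valid whenever $\mu,\mu'\ll\rho$ (the supremum defining $v$ being attained on $\{f_\mu>f_{\mu'}\}$):
$$
v(\mu,\mu')=\tfrac12\int_{\X^\infty}|f_\mu-f_{\mu'}|\,d\rho=\tfrac12\|f_\mu-f_{\mu'}\|_{L^1(\rho)}.
$$
Therefore $\mu\mapsto f_\mu$ is, up to the constant $1/2$, an isometric embedding of $(\C,v)$ into $L^1(\rho)$. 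It then remains to show that $L^1(\rho)$ is separable, so that its subset $\{f_\mu:\mu\in\C\}$ is separable and separability transfers back to $(\C,v)$ through the embedding. For this I would use that $\mathcal F$ is countably generated: it is the $\sigma$-field generated by the countable algebra of cylinders $[x_{1..n}]$, $x_{1..n}\in\X^n$, $n\in\N$. Finite rational-coefficient linear combinations of indicators of these cylinders form a countable family dense in $L^1(\rho)$, giving separability; since any subset of a separable metric space is separable, $\{f_\mu\}$ has a countable dense subset, and its preimage under the embedding is a countable dense subset of $(\C,v)$.

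The main obstacle lies entirely in the necessity direction, and within it in the two structural facts that carry the argument: the isometry $v(\mu,\mu')=\tfrac12\|f_\mu-f_{\mu'}\|_{L^1(\rho)}$ (which depends essentially on the absolute continuity supplied by Theorem~\ref{th:bd}) and the separability of $L^1(\rho)$ (which depends on $\mathcal F$ being countably generated by the cylinder sets). Once these are established the conclusion is routine. I would close by remarking that this theorem reproves Theorem~\ref{th:1}: the necessity direction forces any predictable $\C$ to be separable, and the sufficiency direction then produces a predictor as a mixture over a dense countable subset of $\C$, which is exactly a countable mixture of elements of $\C$ of the form asserted there.
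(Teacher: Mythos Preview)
Your proof is correct and follows essentially the same route as the paper: sufficiency via absolute continuity of $\mu$ with respect to the mixture $\nu$ (you use the contrapositive form, the paper the direct one), and necessity by passing to densities $f_\mu\in L^1(\rho)$ and invoking separability of $L^1(\rho)$. The only cosmetic differences are that you cite the exact Scheff\'e identity $v=\tfrac12\|f_\mu-f_{\mu'}\|_{L^1(\rho)}$ where the paper uses only the inequality $v\le\|f_\mu-f_{\mu'}\|_{L^1(\rho)}$, and you justify separability of $L^1(\rho)$ explicitly via the countable generating algebra of cylinders, whereas the paper appeals more generically to step functions.
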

\begin{proof}
{\em Sufficiency and the mixture predictor.} Let $\C$ be separable in total variation distance, and let $\mathcal D=\{\nu_k:k\in\N\}$ be its dense countable subset.
We have to show that $\nu:=\sum_{k\in\N}w_k\nu_k$, where $w_k$ are any positive real weights that sum to 1, predicts every $\mu\in\C$ in total variation.
To do this, it is enough to show that $\mu(A)>0$ implies $\nu(A)>0$ for every $A\in\mathcal F$ and every $\mu\in\C$. Indeed, 
let $A$ be such that $\mu(A)=\epsilon>0$. Since $\mathcal D$ is dense in $\C$, there is a $k\in\N$ such that $v(\mu,\nu_k)<\epsilon/2$.
Hence $\nu_k(A)\ge\mu(A)-v(\mu,\nu_k)\ge \epsilon/2$ and $\nu(A)\ge w_k\nu_k(A)\ge w_k\epsilon/2>0$.

{\em Necessity.}
 For any $\mu\in\C$, since $\rho$ predicts $\mu$ in total variation, $\mu$ has a density (Radon-Nikodym derivative) $f_\mu$ with respect 
to $\rho$.
We can define $L_1$ distance with respect to $\rho$ as follows $L_1^\rho(\mu,\nu)=\int_{\X^\infty}|f_{\mu}-f_\nu|d\rho$.
The set of all measures that have a density with respect to $\rho$ is separable with respect to this distance 
(for example a dense countable subset can be constructed based on measures whose densities are step-functions with finitely many steps,   that take only rational values, see e.g. \cite{Kolmogorov:75});
therefore, its subset $\C$ is also separable. Let $\mathcal D$ be any dense countable  subset of $\C$.
Thus, for every $\mu\in\C$ and every $\epsilon$ there is a $\mu'\in \mathcal D$ such that $L_1^\rho(\mu,\mu')<\epsilon$.
For every measurable set  $A$ we have 
$$
|\mu(A)-\mu'(A)|=\left|\int_A f_\mu d\rho -\int_A f_{\mu'}d\rho\right|\le \int_A|f_\mu-f_{\mu'}|d\rho\le\int_{\X^\infty}|f_\mu-f_{\mu'}|d\rho<\epsilon.
$$
Therefore, $v(\mu,\mu')=\sup_{A\in\mathcal F}|\mu(A)-\mu'(A)|<\epsilon$ and the set $\C$ is separable in total variation distance.
\end{proof}

\begin{definition}[asymptotic KL ``distance'' $D$] 
Define  asymptotic expected average KL divergence between measures $\mu$ and $\rho$ as 
\begin{equation}\label{eq:akl2}
 D(\mu,\rho)=\limsup_{n\rightarrow\infty} {1\over n}  d_n(\mu,\rho).
\end{equation}
\end{definition}

\begin{theorem}\label{th:sep2}
For a set $\C$ of measures, separability with respect to the asymptotic expected average KL divergence $D$ is a sufficient but not a  necessary condition
for the existence of a predictor:
\begin{itemize}
 \item[(i)] If there exists a countable set $\mathcal D:=\{\nu_k:k\in\N\}\subset\C$ such that  for every $\mu\in\C$ and every $\epsilon>0$ there is a measure 
$\mu'\in\mathcal D$ such  that $D(\mu,\mu')<\epsilon$, then every measure  $\nu$ of the form $\nu=\sum_{k=1}^\infty w_k\mu_k$, where  $w_k$ are any positive
weights that sum to 1,  predicts every $\mu\in\C$ in expected average KL divergence.
\item[(ii)]  There is an uncountable set $\C$ of measures and a measure $\nu$ such that $\nu$ predicts every $\mu\in\C$ in expected average KL divergence,
but $\mu_1\ne\mu_2$ implies $D(\mu_1,\mu_2)=\infty$ for every $\mu_1,\mu_2\in\C$; in particular, $\C$ is not separable with respect to $D$.
\end{itemize}
\end{theorem}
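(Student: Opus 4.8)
The plan is to handle the two parts separately, since they are of quite different character: part~(i) is a short computation exploiting the mixture structure, while part~(ii) needs an explicit construction.

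For part~(i), I would fix $\mu\in\C$ and an arbitrary $\epsilon>0$, and pick $\nu_k\in\mathcal D$ with $D(\mu,\nu_k)<\epsilon$. The key observation is that, since all weights are positive, $\nu(x_{1..n})\ge w_k\nu_k(x_{1..n})$ for every $x_{1..n}$. Substituting this into the identity~(\ref{eq:kl}) gives $d_n(\mu,\nu)\le -\log w_k + d_n(\mu,\nu_k)$. Dividing by $n$ and taking $\limsup_{n\to\infty}$ annihilates the constant $-\log w_k$, so that $\limsup_n \frac1n d_n(\mu,\nu)\le D(\mu,\nu_k)<\epsilon$. As $\epsilon$ is arbitrary and $\frac1n d_n(\mu,\nu)\ge0$, this forces $\frac1n d_n(\mu,\nu)\to0$, which is precisely prediction of $\mu$ in expected average KL divergence. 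This is the same mixture argument as in the sufficiency direction of Theorem~\ref{th:sep1}, with total variation replaced by $D$.

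For part~(ii), the idea is to place all the information distinguishing the measures on a very sparse set of coordinates, so that a single predictor pays only a sublinear penalty for not knowing which measure it faces, while any two distinct measures become mutually singular in a way that blows up the KL divergence. Concretely, fix $\X=\{0,1\}$ and a rapidly growing sequence of positions, e.g.\ $n_k:=2^k$. To each infinite binary string $\alpha\in\{0,1\}^\infty$ I associate the deterministic measure $\mu_\alpha$ concentrated on the sequence $s(\alpha)$ that equals $\alpha_k$ at each position $n_k$ and $0$ at every other position; there are uncountably many such $\alpha$, and distinct $\alpha$ give distinct $\mu_\alpha$. As the predictor I take the measure $\nu$ that outputs a fair coin flip independently at each position $n_k$ and $0$ deterministically everywhere else. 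Since $\mu_\alpha$ is deterministic, $d_n(\mu_\alpha,\nu)=-\log\nu(s(\alpha)_{1..n})$, and only the special positions $n_k\le n$ contribute, each exactly $\log 2$; hence $d_n(\mu_\alpha,\nu)=(\log 2)\,\#\{k:n_k\le n\}=O(\log n)=o(n)$ and $\nu$ predicts every $\mu_\alpha$. Conversely, if $\alpha\ne\beta$ first differ at index $k$, then for every $n\ge n_k$ we have $\mu_\beta(s(\alpha)_{1..n})=0$ while $\mu_\alpha(s(\alpha)_{1..n})=1$, so the right-hand side of~(\ref{eq:kl}) for $d_n(\mu_\alpha,\mu_\beta)$ contains the term $-1\cdot\log 0=+\infty$; thus $d_n(\mu_\alpha,\mu_\beta)=\infty$ for all $n\ge n_k$ and $D(\mu_\alpha,\mu_\beta)=\infty$, so $\C:=\{\mu_\alpha\}$ is not separable in $D$.

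The main obstacle lies entirely in part~(ii): one must design the construction so that its two requirements pull in opposite directions without colliding. Sparsity of the informative coordinates is exactly what keeps the predictor's cumulative penalty at $o(n)$, whereas the deterministic, hence mutually singular, nature of the $\mu_\alpha$ is what forces the pairwise $D$ to be infinite. The schedule $n_k=2^k$ is comfortably fast, but any increasing schedule with $\#\{k:n_k\le n\}=o(n)$ would serve equally well. Part~(i), by contrast, presents no real difficulty beyond the mixture inequality already used for Theorem~\ref{th:sep1}.
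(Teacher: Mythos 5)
Your proof is correct. Part~(i) is exactly the paper's argument: bound $\nu\ge w_k\nu_k$ pointwise, pass through the identity~(\ref{eq:kl}) to get $d_n(\mu,\nu)\le-\log w_k+d_n(\mu,\nu_k)$, and let the constant wash out in the time average. For part~(ii) you supply a different counterexample from the paper's, though in the same spirit. The paper takes $\C$ to be all deterministic sequences with fewer than $\sqrt{n}$ zeros among the first $n$ symbols, predicted by the product measure $\nu(x_n=0)=1/n$; you instead fix the informative coordinates in advance at positions $n_k=2^k$, index the deterministic measures by the bits placed there, and predict with a fair coin on those coordinates. Both witnesses are valid: in each case the measures are mutually singular point masses, so any two distinct ones are at infinite $D$-distance (forcing any $D$-dense subset to be all of the uncountable $\C$), while the predictor's cumulative log-loss is $o(n)$ ($O(\log n)$ in your version, $O(\sqrt{n}\log n)$ in the paper's). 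Your construction makes the divergence computation slightly cleaner, since every sequence in your class incurs exactly $(\log 2)\lfloor\log_2 n\rfloor$ by time $n$; the paper's class is arguably a more natural family (no privileged coordinates, closed under shifting the sparse zeros around) and its predictor must be nonuniform ($1/n$ at step $n$) precisely because the positions of the deviations are not known in advance. Either example establishes part~(ii).
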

\begin{proof}
 {\em (i)} Fix $\mu\in\C$. For every $\epsilon>0$ pick $k\in\N$ such that $D(\mu, \nu_k)<\epsilon$. We have 
$$ 
 d_n(\mu,\nu) = \E_\mu\log\frac{\mu(x_{1..n})}{\nu({x_{1..n}})} \le \E_\mu\log\frac{\mu(x_{1..n})}{w_k\nu_k{(x_{1..n}})}  = -\log w_k + d_n(\mu,\nu_k) \le n\epsilon + o(n).
$$ Since this holds for every $\epsilon$, we conclude ${1\over n} d_n(\mu,\nu)\to0$.

{\em (ii)} Let $\C$ be the set of all deterministic sequences (measures concentrated on just one sequence) such that the number of 0s in the 
first $n$ symbols is less than $\sqrt{n}$. Clearly, this set is uncountable. It is easy to check that $\mu_1\ne\mu_2$ implies $D(\mu_1,\mu_2)=\infty$ for every $\mu_1,\mu_2\in\C$, but 
the predictor $\nu$ given by $\nu(x_n=0)=1/n$ independently for different $n$, predicts every $\mu\in\C$ in expected average KL divergence.
\end{proof}

{\noindent\bf Examples.} Basically, the examples of the preceding section carry over here. Indeed, the example of countable families
is trivially also an example of separable (with respect to either of the considered topologies) family.
For Bernoulli i.i.d. and $k$-order Markov processes, the (countable) sets of processes that have rational 
values of the parameters, considered in the previous section, are dense both in the topology of the parametrization
and with respect to the asymptotic average divergence $D$. It is also easy to check from the arguments presented in the corresponding example
of Section~\ref{s:ba} that the family of all $k$-order stationary Markov processes with rational values
of the parameters, where we take all $k\in\N$, is dense with respect to $D$ in the set $\C_S$ of all stationary processes, so
that $\C_S$ is separable with respect to $D$. Thus, the sufficient but not necessary condition of separability is satisfied in this case.
On the other hand, neither of these latter families is separable with respect to the topology of total variation distance.

\subsection{Conditions based on the local behaviour of measures.} \label{s:loc}
Next we provide some sufficient conditions for the existence of a predictor based 
on local characteristics of the class of measures, that is, measures truncated to the first $n$ observations.
First of all, it must be noted that necessary and sufficient conditions cannot be obtained this way. The basic
example is that of a family $\C_0$ of all deterministic sequences that are 0 from some time on. This is a countable
class of measures which is very easy to predict. Yet the class of measures on $\X^n$ obtained by truncating
all measures in $\C_0$ to the first $n$ observation coincides with what would be obtained by truncating all deterministic
measures to the first $n$ observation, the latter class being obviously not predictable at all (see also examples below). 
Nevertheless, considering this kind of local behaviour of measures, one can obtain not only sufficient conditions 
for the existence of a predictor, but also rates of convergence of the prediction error. It also gives some
ideas of how to construct  predictors, for the cases when the sufficient conditions obtained are met.

For  a class   $\C$ of  stochastic processes   and a sequence $x_{1..n}\in\X^n$ introduce
the coefficients
\begin{equation}\label{eq:sup}
 c_{x_{1..n}}(\C):=\sup_{\mu\in\C}\mu(x_{1..n}).
\end{equation}
Define also the normalizer 
\begin{equation}\label{eq:nor}
 c_n(\C):=\sum_{x_{1..n}\in\X^n}c_{x_{1..n}}(\C).
\end{equation}
\begin{definition}[NML estimate]
  The  normalized maximum likelihood (e.g. \cite{Krichevsky:93}) estimator $\lambda$ is defined as 
\begin{equation}\label{eq:nml}
\lambda_\C(x_{1..n}):= \frac{1}{c_n(\C) } c_{x_{1..n}}(\C),
\end{equation}
for each $x_{1..n}\in\X^n$.
\end{definition}
The family $\lambda_\C(x_{1..n})$ (indexed by $n$) in general does  not immediately define a stochastic process over $\X^\infty$ ($\lambda_\C$ are not consistent for different $n$);
thus, in particular, using average KL divergence for measuring prediction quality would not make sense, since
$$d_n(\mu(\cdot|x_{1..n-1}),\lambda_\C(\cdot|x_{1..n-1}))$$ can be negative, as the following example shows.

{\noindent\bf Example: negative $d_n$ for NML estimates}. Let the processes $\mu_i$, $i\in\{1,\dots,4\}$ be defined on
the steps $n=1,2$ as follows. $\mu_1(00)=\mu_2(01)=\mu_4(11)=1$, while $\mu_3(01)=\mu_3(00)=1/2$. 
We have $\lambda_\C(1)=\lambda_\C(0)=1/2$, while  $\lambda_\C(00)=\lambda_\C(01)=\lambda_\C(11)=1/3$. If we 
define $\lambda_\C(x|y)=\lambda_\C(yx)/\lambda_\C(y)$ we get $\lambda_\C(1|0)=\lambda_\C(0|0)=2/3$. Then
$d_2(\mu_3(\cdot|0),\lambda_\C(\cdot|0))=\log3/4<0$.

Yet, by taking an appropriate mixture, it is still possible to construct a predictor (a stochastic process) based on $\lambda$, 
that predicts all the measures in the class. 
\begin{definition}[predictor $\rho_c$]\label{def:nml}
Let $w:=6/\pi^2$  and let $w_k:=\frac{1}{w k^2}$.
Define a measure $\mu_k$ as follows. On the first $k$ steps it is defined as $\lambda_\C$, and 
for $n>k$ it outputs only zeros with probability 1; so, $\mu_k(x_{1..k})=\lambda_\C(x_{1..k})$ and 
$\mu_k(x_n=0)=1$ for $n>k$. 
Define the measure $\rho_c$ as
\begin{equation}\label{eq:r2}
\rho_c=\sum_{k=1}^\infty w_k\mu_k.
\end{equation}
\end{definition}
Thus, we have taken the normalized maximum likelihood estimates $\lambda_n$ for each $n$ and continued
them arbitrarily (actually, by a deterministic sequence) to obtain a sequence of measures on $(\X^\infty, \mathcal F)$ that can be summed.

\begin{theorem}\label{th:ml} For  a class $\C$ of stochastic processes the predictor  $\rho_c$ defined above satisfies
\begin{equation}\label{eq:mlb}
{1\over n} d_n(\mu,\rho_c)\le \frac{\log c_n(\C)}{n} + O\left(\frac{\log n}{n}\right);
\end{equation}
in particular, if \begin{equation}\label{eq:cond1}
\log c_n(\C) =o(n).
\end{equation}
then $\rho_c$ predicts every $\mu\in\C$ in expected average KL divergence.
\end{theorem}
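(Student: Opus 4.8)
The plan is to bound $d_n(\mu,\rho_c)$ pointwise through a single well-chosen term of the mixture, and then take the $\mu$-expectation. The starting point is the identity~(\ref{eq:kl}), which rewrites $d_n(\mu,\rho_c)$ as the KL divergence between the restrictions of $\mu$ and $\rho_c$ to the first $n$ symbols:
\[
d_n(\mu,\rho_c)=-\sum_{x_{1..n}\in\X^n}\mu(x_{1..n})\log\frac{\rho_c(x_{1..n})}{\mu(x_{1..n})}.
\]
So it suffices to produce a good \emph{uniform} lower bound on the ratio $\rho_c(x_{1..n})/\mu(x_{1..n})$.

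The conceptual point is that, although the family $\lambda_\C$ is not a consistent process, we only ever need it at the matching horizon $n$, and the mixture $\rho_c$ is built precisely so that horizon $n$ is ``seen'' by the summand $\mu_n$. First I would discard all terms of $\rho_c=\sum_k w_k\mu_k$ except $k=n$, giving $\rho_c(x_{1..n})\ge w_n\mu_n(x_{1..n})$. By Definition~\ref{def:nml}, $\mu_n$ coincides with $\lambda_\C$ on the first $n$ steps, so $\mu_n(x_{1..n})=\lambda_\C(x_{1..n})$. Then, using the defining formula~(\ref{eq:nml}) for the NML estimator together with $c_{x_{1..n}}(\C)=\sup_{\mu'\in\C}\mu'(x_{1..n})\ge\mu(x_{1..n})$, I get
\[
\rho_c(x_{1..n})\ge w_n\,\lambda_\C(x_{1..n})=w_n\frac{c_{x_{1..n}}(\C)}{c_n(\C)}\ge \frac{w_n}{c_n(\C)}\,\mu(x_{1..n}),
\]
valid for every $x_{1..n}\in\X^n$, in particular wherever $\mu(x_{1..n})>0$.

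Next I would substitute this into the KL identity. Since the resulting bound $\rho_c(x_{1..n})/\mu(x_{1..n})\ge w_n/c_n(\C)$ does not depend on $x_{1..n}$, and $\sum_{x_{1..n}\in\X^n}\mu(x_{1..n})=1$, the sum collapses to
\[
d_n(\mu,\rho_c)\le -\log\frac{w_n}{c_n(\C)}=\log c_n(\C)-\log w_n.
\]
Finally, recalling $w_n=1/(wn^2)$ with $w=6/\pi^2$, we have $-\log w_n=\log w+2\log n=O(\log n)$, so dividing by $n$ yields~(\ref{eq:mlb}); when $\log c_n(\C)=o(n)$ both terms on the right vanish, giving the claimed consistency in expected average KL divergence.

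There is essentially no hard obstacle here: the whole argument is driven by the single-term lower bound on the mixture, and the remaining steps are a direct expectation followed by the elementary estimate $-\log w_n=O(\log n)$. If anything requires care, it is only the bookkeeping ensuring that the NML-based construction makes $\lambda_\C(x_{1..n})$ appear \emph{exactly} (not merely up to constants) at horizon $n$, which is what keeps the normalizer $c_n(\C)$ itself---rather than some cruder quantity---in the final bound.
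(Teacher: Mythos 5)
Your proposal is correct and follows essentially the same route as the paper's proof: drop all mixture components except the $n$-th, use $\mu_n(x_{1..n})=\lambda_\C(x_{1..n})=c_{x_{1..n}}(\C)/c_n(\C)\ge\mu(x_{1..n})/c_n(\C)$, and conclude $d_n(\mu,\rho_c)\le\log c_n(\C)-\log w_n=\log c_n(\C)+O(\log n)$. No gaps.
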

\begin{proof}
Indeed, 
\begin{multline}\label{eq:mlproof}
 {1\over n} d_n(\mu,\rho_c) 
  = \frac{1}{n}\E \log \frac {\mu(x_{1..n})}{\rho_c(x_{1..n})} 
  \le \frac{1}{n} \E \log \frac {\mu(x_{1..n})}{w_n \mu_n(x_{1..n})}\\
  \le \frac{1}{n} \log\frac{c_n(\C)}{w_n} = \frac{1}{n}(\log c_n(\C) + 2\log n + \log w).
\end{multline}
\end{proof}

{\noindent\bf Example: i.i.d., finite-memory.} To illustrate the applicability of the theorem we first consider 
the class of i.i.d. processes $\C_B$ over the binary alphabet $\X=\{0,1\}$. 
It is easy to see that for each $x_1,\dots,x_n$ 
$$
\sup_{\mu\in\C_B} \mu(x_{1..n})=(k/n)^k(1-k/n)^{n-k}
$$
where $k=\#\{i\le n: x_i=0\}$ is the number of 0s in $x_1,\dots,x_n$. 
For the constants $c_n(\C)$ we can derive
\begin{multline*}
 c_n(C)=\sum_{x_{1..n}\in \X^n}\sup_{\mu\in\C_B}\mu(x_{1..n})=\sum_{x_{1..n}\in \X^n} (k/n)^k(1-k/n)^{n-k}\\=\sum_{k=0}^n{n\choose k}(k/n)^k(1-k/n)^{n-k}\le 
\sum_{k=0}^n\sum_{t=0}^n{n\choose k}(k/n)^t(1-k/n)^{n-t}=n+1,
\end{multline*}
so that  $c_n(C)\le n+1$.

In general, for the class $\C_k$ of {\bf processes with memory $k$} over a finite
space $\X$ we can get polynomial $c_n(\C)$  (see e.g. \cite{Krichevsky:93}, and also \cite{Ryabko:07pqisit}).
Thus, with respect to  finite-memory processes, the conditions 
of Theorem~\ref{th:ml} leave ample space for the growth of $c_n(\C)$, since~(\ref{eq:cond1}) allows  subexponential growth of $c_n(\C)$.
Moreover, these conditions are tight, as the following example shows.

{\noindent \bf Example: exponential coefficients are not sufficient.} 
Observe that the condition~(\ref{eq:cond1}) cannot 
be relaxed further, in the sense that exponential coefficients $c_n$ are 
not sufficient for prediction. Indeed, for the class of all deterministic 
processes (that is,  each process from the class produces some fixed sequence 
of observations with probability 1) we have $c_n=2^n$, while obviously 
for this class a predictor does not exist. 

{\noindent \bf Example: stationary processes.} For the set of all stationary processes we can obtain $c_n(C)\ge2^n/n$ (as is easy to 
see by considering periodic $n$-order Markov processes, for each $n\in\N$), so
that the conditions of Theorem~\ref{th:ml} are not satisfied.  This cannot be fixed, since  
uniform rates of convergence cannot be obtained for this family of processes, as was shown
in \cite{BRyabko:88}.

{\bf Optimal rates of convergence}. A natural question that arises with respect to the bound~(\ref{eq:mlb}) is whether it can be
matched by a lower bound. This question is closely related to the optimality
of the normalized maximum likelihood estimates used in the construction of the predictor. In general,
since NML estimates are not optimal, neither are the rates of convergence in~(\ref{eq:mlb}).
To obtain (close to) optimal rates one has to consider a different measure of capacity.
 
To do so, we make the following connection to a problem 
in information theory. Let $\mathcal P(\X^\infty)$ be the set of all stochastic processes (probability measures) on
the space $(\X^\infty,\mathcal F)$, and let $\mathcal P(\X)$ be the set of probability distributions over a (finite) set $\X$. For a class $\C$ of measures we are interested in a predictor
that has a small (or minimal) worst-case (with respect to the class $\C$) probability of error.
Thus, we are interested in the quantity
\begin{equation}\label{eq:infsup}
\inf_{\rho\in\mathcal P(\X^\infty)} \sup_{\mu\in\C} D(\mu,\rho),
\end{equation}
where the infimum is taken over all stochastic processes $\rho$, and $D$ is the 
asymptotic expected average KL divergence~(\ref{eq:akl2}).
(In particular, we are interested in the conditions under which the quantity~(\ref{eq:infsup})
equals zero.) This problem has been studied for the case when the probability measures
are over a finite set $\X$, and $D$ is replaced simply by the KL divergence $d$ between 
the measures. 
Thus, the problem was to find the probability measure $\rho$ (if it exists) on which the following 
minimax is attained 
\begin{equation}\label{eq:infsup2}
R(A):=\inf_{\rho\in\mathcal P(\X)}\sup_{\mu\in A} d(\mu,\rho),
\end{equation}
where $A\subset\mathcal P(\X)$.
This problem is closely related to the problem of finding the best code for the class of sources
$A$, which was its original motivation. 
The normalized maximum likelihood distribution considered above does not in general
lead to the optimum solution for this problem. 
The optimum solution is obtained through the  result that relates the minimax~(\ref{eq:infsup2})  to the so-called channel capacity.
\begin{definition}[Channel capacity]
For a set $A$ of measures on a finite set $\X$ the {\em channel capacity} of
$A$ is defined as 
\begin{equation}\label{eq:cc}
 C(A):=\sup_{P\in \mathcal P_0(A)} \sum_{\mu\in S(P)} P(\mu) d(\mu,\rho_P),
\end{equation} where $\mathcal P_0(A)$ is the set of all probability distributions on $A$ that have a finite support, $S(P)$ is the (finite) support
 of a distribution $P\in\mathcal P_0(A)$, and
$\rho_P=\sum_{\mu\in S(P)} P(\mu)\mu$.
\end{definition}
 It is shown in \cite{BRyabko:79,Gallager:76} that 
$
C(A)=R(A),
$
thus reducing the problem of finding a minimax to an optimization problem.
For probability measures over infinite spaces this result ($R(A)=C(A)$)
was generalized by \cite{Haussler:97}, but the divergence between probability distributions is measured
by KL divergence (and not asymptotic average KL divergence), which gives  infinite $R(A)$ e.g.  already for the class of i.i.d. processes. 

However, truncating measures in a class $\C$ to the first $n$ observations, we can
use the results about channel capacity to analyze the predictive properties of the class.
Moreover, the rates of convergence that can be obtained along these lines are close to optimal.
In order to pass from measures minimizing the divergence for each individual $n$ to a process that minimizes
the divergence for all $n$ we use the same idea as when constructing the process~$\rho_c$.

\begin{theorem}\label{th:cc} Let $\C$ be a set of measures on $(\X^\infty,\mathcal F)$, and let $\C^n$ be the class
of measures from $\C$ restricted to $\X^n$. 
There exists a measure $\rho_C$ such that
\begin{equation}\label{eq:ccb}
 {1\over n} d_n(\mu,\rho_C)\le \frac{ C(\C^n)}{n} + O\left(\frac{\log n}{n}\right);
\end{equation}
in particular, if  $C(\C^n)/n\rightarrow0$ then  $\rho_C$ predicts every $\mu\in\C$ in expected average KL divergence.
Moreover, for any measure $\rho_C$ and every $\epsilon>0$ there exists $\mu\in\C$ such that
$$
 {1\over n} d_n(\mu,\rho_C)\ge \frac{ C(\C^n)}{n} -\epsilon.
$$
\end{theorem}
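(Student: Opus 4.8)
The plan is to reduce both directions to the finite-alphabet channel-capacity identity $R(A)=C(A)$ of \cite{BRyabko:79,Gallager:76}, applied separately at each horizon $n$ with the finite set $\X^n$ playing the role of the alphabet and $\C^n$ playing the role of the source class $A$. The bridge between the infinite-horizon and finite-horizon settings is the identity~(\ref{eq:kl}): it shows that $d_n(\mu,\rho)$ equals the ordinary KL divergence $d(\mu^n,\rho^n)$ between the restrictions of $\mu$ and $\rho$ to $\X^n$. Consequently, minimizing the worst-case $d_n$ over predictors, when one only cares about the first $n$ symbols, is exactly the minimax problem~(\ref{eq:infsup2}) for $A=\C^n$, whose value is $R(\C^n)=C(\C^n)$.

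For the upper bound I would first fix $n$ and invoke $R(\C^n)=C(\C^n)$ to obtain a distribution $\rho^{*,n}\in\mathcal P(\X^n)$ that is minimax-optimal, i.e. $\sup_{\mu\in\C}d(\mu^n,\rho^{*,n})\le C(\C^n)$. The infimum defining $R$ is attained because $\X^n$ is finite, $\mathcal P(\X^n)$ is compact, and $\rho\mapsto\sup_\mu d(\mu^n,\rho)$ is lower semicontinuous; alternatively an approximate minimizer with an extra $O(1/n)$ term suffices. I then assemble these per-horizon distributions into a single process exactly as in Definition~\ref{def:nml}: extend each $\rho^{*,k}$ to a measure $\mu_k$ on $\X^\infty$ by setting $\mu_k(x_{1..k})=\rho^{*,k}(x_{1..k})$ and emitting zeros thereafter, and put $\rho_C=\sum_{k}w_k\mu_k$ with $w_k=1/(wk^2)$, $w=6/\pi^2$. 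The estimate then mirrors the proof of Theorem~\ref{th:ml}: since $\rho_C(x_{1..n})\ge w_n\mu_n(x_{1..n})=w_n\rho^{*,n}(x_{1..n})$, monotonicity of the logarithm together with~(\ref{eq:kl}) gives
$$\frac1n d_n(\mu,\rho_C)\le\frac1n\bigl(d(\mu^n,\rho^{*,n})-\log w_n\bigr)\le\frac{C(\C^n)}{n}+\frac{2\log n+\log w}{n},$$
which is~(\ref{eq:ccb}) with the explicit $O(\log n/n)$ term, and the convergence claim follows whenever $C(\C^n)/n\to0$.

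For the lower bound, fix $n$ and an arbitrary process $\rho_C$, and consider its restriction $\rho_C^n\in\mathcal P(\X^n)$. Because $\rho_C^n$ is one admissible choice in the infimum defining $R(\C^n)$, we have $\sup_{\mu\in\C}d(\mu^n,\rho_C^n)\ge R(\C^n)=C(\C^n)$; hence for every $\delta>0$ there is a $\mu\in\C$ (depending on $n$, $\rho_C$, $\delta$) with $d(\mu^n,\rho_C^n)\ge C(\C^n)-\delta$. Using~(\ref{eq:kl}) and taking $\delta=n\epsilon$ yields $\tfrac1n d_n(\mu,\rho_C)\ge C(\C^n)/n-\epsilon$, as required.

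I expect the only genuine obstacle to be the careful transcription of the cited finite-alphabet identity $R(A)=C(A)$ to the ``super-alphabet'' $\X^n$, and the verification that a minimax-optimal output distribution exists (or is approximable) together with the interpretation that in the lower bound the witness $\mu$ is allowed to depend on $n$. Once this is in place both directions are short: the upper bound is essentially a verbatim repetition of the mixing argument behind $\rho_c$, and the lower bound is a one-line consequence of the definition of $R$ as an infimum.
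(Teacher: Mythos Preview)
Your proposal is correct and follows essentially the same approach as the paper: reduce to the finite-alphabet identity $R(\C^n)=C(\C^n)$ at each horizon $n$, pick a (near-)minimax distribution on $\X^n$, extend it by zeros, and mix over $n$ with weights $w_n=1/(wn^2)$ exactly as in Definition~\ref{def:nml}; the lower bound is likewise the one-line consequence of $R=C$ you describe. The only cosmetic difference is that the paper invokes Gallager's approximating sequence and picks one within distance $1$ of the infimum, whereas you argue attainment via compactness and lower semicontinuity (and note that approximation would also do); both yield the same $O((\log n)/n)$ overhead.
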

\begin{proof} 
As shown in \cite{Gallager:76}, for each $n$ there exists a  sequence $\nu^n_k$, $k\in\N$
of measures on $\X^n$ such that 
$$
 \lim_{k\rightarrow\infty} \sup_{\mu\in \C^n} d_n(\mu,\nu^n_k)\rightarrow C(\C^n).
$$ 
For each $n\in\N$ find an index $k_n$ such that  
$$ 
|\sup_{\mu\in \C^n} d_n(\mu,\nu^n_{k_n}) - C(\C^n)|\le 1.
$$
Define the measure $\rho_n$ as follows. On the first $n$ symbols it coincides with $\nu^n_{k_n}$ and
$\rho_n(x_m=0)=1$ for $m>n$. Finally, set $\rho_C=\sum_{n=1}^\infty w_n\rho_n$, where $w_k=\frac{1}{w n^2}, w=6/\pi^2$.
We have to show that $\lim_{n\rightarrow\infty}{1\over n} d_n(\mu,\rho_C)=0$ for every $\mu\in\C$.
Indeed, similarly to~(\ref{eq:mlproof}), we have
\begin{multline}\label{eq:ccproof}
{1\over n} d_n(\mu,\rho_C)=\frac{1}{n}\E_\mu\log\frac{\mu(x_{1..n})}{\rho_C(x_{1..n})} \\
 \le \frac{\log w_k^{-1}}{n} +\frac{1}{n} \E_\mu\log\frac{\mu(x_{1..n})}{\rho_n(x_{1..n})} 
 \le
 \frac{\log w +2\log n}{ n}+\frac{1}{n} d_n(\mu,\rho_n)\\
\le o(1) + \frac{C(\C^n)}{n}.
\end{multline} 

The second statement follows from the fact \cite{BRyabko:79,Gallager:76} that \ $C(\C^n)=R(\C^n)$ \ (cf.~(\ref{eq:infsup2})).
\end{proof}  

Thus, if the channel capacity  $C(\C^n)$ grows sublinearly, a predictor 
can be constructed for the class of processes $\C$. In this case 
the problem of constructing the predictor is reduced to finding the channel 
capacities for different $n$ and finding the corresponding measures on which 
they are  attained or approached. 

\noindent {\bf Examples.} For the class of all Bernoulli i.i.d. processes,
the channel  capacity   $C(\C^n_B)$   is known to be $O(\log n)$ \cite{Krichevsky:93}.
For the family of all stationary processes it is $O(n)$, so that the conditions of Theorem~\ref{th:cc} are satisfied
for the former but not for the latter. 

We also remark that the requirement of a sublinear channel capacity cannot be relaxed,
in the sense that a linear channel capacity is not sufficient for prediction, since
it is the maximal possible  capacity for a set of measures on $\X^n$, achieved e.g. on the set of all measures, or on 
the set of all deterministic sequences.

\section{Discussion}\label{s:disc}
The first possible  extension of the results of the paper that comes to mind is to find out 
whether the same  holds for other measures of performance, such as prediction in KL divergence
without time-averaging,  or with probability 1 rather then in expectation, or with respect to other
measures of prediction error, such as absolute distance.  (See    \cite{Ryabko:07pqisit} 
for a discussion of  different measures of performance and
relations between them.)
Maybe the same 
results can be obtained in more general formulations, for example, using $f$-divergences of \cite{Csiszar:67}.

More generally, the questions we addressed in this work are a part of a larger problem:
given an arbitrary class $\C$ of stochastic processes, find the best predictor for it. 
We have considered two subproblems: first, in which form to look for a predictor if one exists.
Here we have shown that if any predictor works then a Bayesian one works  too.
The second one is to characterize families of processes for which a predictor exists.
Here we have analyzed what the notion of separability furnishes in this respect, as well 
as identified some simple sufficient conditions based on the local behaviour of measures in the class.
 Another approach would be to 
identify the conditions which two measures $\mu$ and $\rho$ have to satisfy in order for $\rho$ to predict $\mu$.
For prediction in total variation such conditions have been identified \cite{Blackwell:62,Kalai:94} and, in particular, in the context of the present
 work, they turn out to be very useful. \cite{Kalai:94} also provide some characterization for the case of a weaker notion 
of prediction: difference between conditional probabilities of the next (several) outcomes (weak merging of opinions).
In \cite{Ryabko:08pqaml} some sufficient conditions are found for the case of prediction in expected average KL divergence,
and prediction in average KL divergence with probability~1. 
Of course, another  very natural approach to the general  problem posed above
is to try and find predictors (in the form of algorithms) for some particular classes of processes which 
are of practical interest.
Towards this end, we have found a rather simple  form that some solution to this question has if a solution exists: 
a Bayesian predictor whose prior is concentrated on a countable set. We have also identified some sufficient
conditions under which a predictor can actually be constructed (e.g. using NML estimates). 
However, the larger question of how to construct an optimal predictor for an arbitrary given family of processes, remains open.

Taking an even more general perspective, one can consider the problem of finding the best response
to the actions of a (stochastic) environment, which itself responds to the actions of a learner. 
Allowing into consideration environments that change their behaviour in response to the action of the learner,
clearly makes the problem much more difficult, but it also dramatically extends the range of applications.
For this general problem one can pose the same questions: given a set $\C$ of environments, how can we construct a
learner that is (asymptotically) optimal if any environment from $\C$ is chosen to generate the data? One can 
 consider Bayesian learners for this formulation too \cite{Hutter:04uaibook}; it would be interesting to find out
whether one can show that when there is an learner which is optimal in every environment from $\C$, then there is
a Bayesian learner with a countably supported prior that has this property too. 

%


\end{document}